\def\eqref#1{equation~\ref{#1}}
\def\1{\bm{1}}
\DeclareMathAlphabet{\mathsfit}{\encodingdefault}{\sfdefault}{m}{sl}
\SetMathAlphabet{\mathsfit}{bold}{\encodingdefault}{\sfdefault}{bx}{n}
\def\gA{{\mathcal{A}}}
\def\gE{{\mathcal{E}}}
\def\gH{{\mathcal{H}}}
\def\gO{{\mathcal{O}}}
\def\gS{{\mathcal{S}}}
\def\gT{{\mathcal{T}}}
\def\gZ{{\mathcal{Z}}}
\newcommand{\E}[2]{\mathbb{E}_{#1} \left[#2\right]}
\newcommand{\Var}{\mathrm{Var}}
\DeclareMathOperator*{\argmax}{arg\,max}
\newcommand{\prob}[1]{\mathbb{P} \left(#1\right)}
\newtheorem{definition}{Definition}[section]
\newtheorem{lemma}{Lemma}[section]
\newtheorem{assumption}{Assumption}[section]
\newtheorem{theorem}{Theorem}[section]
\renewcommand{\hat}{\widehat}
\newcommand{\pevi}{\ensuremath{\tt PEVI }}
\def\polylog{\operatorname{polylog}}
\title{Offline RL with Observation Histories:\\ Analyzing and Improving Sample Complexity}
\author{%
Joey Hong \quad Anca Dragan \quad Sergey Levine \\
UC Berkeley \\
\texttt{\{joey\_hong,anca,sergey.levine\}@berkeley.edu}
}
\begin{document}

\maketitle

\begin{abstract}
Offline reinforcement learning (RL) can in principle synthesize more optimal behavior from a dataset consisting only of suboptimal trials. One way that this can happen is by ``stitching'' together the best parts of otherwise suboptimal trajectories that overlap on similar states, to create new behaviors where each individual state is in-distribution, but the overall returns are higher.
However, in many interesting and complex applications, such as autonomous navigation and dialogue systems, the state is partially observed. Even worse, the state representation is unknown or not easy to define.
In such cases, policies and value functions are often conditioned on \emph{observation histories} instead of states.
In these cases, it is not clear if the same kind of ``stitching'' is feasible at the level of observation histories, since two different trajectories would always have different histories, and thus ``similar states'' that might lead to effective stitching cannot be leveraged. 
Theoretically, we show that standard offline RL algorithms conditioned on observation histories suffer from poor sample complexity, in accordance with the above intuition.
We then identify sufficient conditions under which offline RL can still be efficient -- intuitively, it needs to learn a compact representation of history comprising only features relevant for action selection. We introduce a \emph{bisimulation loss} that captures the extent to which this happens, and propose that offline RL can explicitly optimize this loss to aid worst-case sample complexity. Empirically, we show that across a variety of tasks either our proposed loss improves performance, or the value of this loss is already minimized as a consequence of standard offline RL, indicating that it correlates well with good performance.

\end{abstract}

\section{Introduction}
\label{sec:introduction}

Deep reinforcement learning (RL) has achieved impressive performance in games \citep{mnih2013playing,silver2017mastering,alphastar}, robotic locomotion \citep{schulman2015trust,schulman2017proximal}, and control \citep{mujoco,sac}.
A key challenge in the widespread adoption of RL algorithms is the need for deploying a suboptimal policy in the environment to collect online interactions, which can be detrimental in many applications such as recommender systems \citep{afsar2021reinforcement}, healthcare \citep{shortreed2011informing,Wang2018SupervisedRL}, and robotics \citep{kalashnikov2018qtopt}. 
Offline RL aims to learn effective policies entirely from an offline dataset of previously collected demonstrations \citep{levine2020offline},
which makes it a promising approach for applications where exploring online from scratch is unsafe or costly.
A major reason for the success of offline RL algorithms is their ability to combine components of suboptimal trajectories in the offline dataset using common states,
a phenomenon called ``trajectory stitching" \citep{fu19diagnosing,d4rl}. 

Most offline RL methods are formulated in a Markov decision process (MDP) where the state is fully observed \citep{suttonrlbook}. However, in many real-world tasks, the state is only partially observed, corresponding to a partially observable Markov decision process (POMDP) \citep{spaan2012partially}. 
For example, in autonomous driving, the robot is limited to information measured by sensors, and does not directly perceive the positions of every car on the road, much less the intentions of every driver. As another example, in dialogue systems, the conversational agent can only observe (potentially noisy and redundant) utterances of the other agents, while their underlying preferences and mental state are hidden.  In fact, there is often not even a clear representation or parameterization of ``state" (e.g., what is the space of human intentions or preferences?).
Therefore, in such applications, policies must instead be conditioned on all observations thus far -- the \emph{observation history}. 
Na\"{i}vely, this leads to concerns on the efficiency of existing offline RL algorithms.
Offline RL is much less likely to utilize suboptimal behaviors if stitching them requires shared observation histories among them, as histories are much less likely to repeat in datasets that are not prohibitively large. 

In this work, we aim to answer the following question:
\emph{When and how can we improve the sample efficiency of offline RL algorithms when policies are conditioned on entire observation histories?}
Given that observation histories make na\"{i}ve stitching very inefficient,
we study this question from the lens of when and how we can enable history-conditioned offline RL to efficiently leverage trajectory stitching.
Our focus is on a theoretic analysis of this question, though we also provide simple empirical evaluations to confirm our findings.
Theoretically, we first show that in the worst case, na\"{i}ve offline RL using observation histories can lead to very poor sample complexity bounds. We show that prior pessimistic offline RL algorithms with near-optimal sample complexity guarantees in fully observed MDPs \citep{rashidinejad2021bridging,jin2020pessimism} fail to learn efficiently with observation histories.
We also propose a remedy to this, by learning a compact representation of histories that contains only the relevant information for action selection. When these representations induce a \emph{bisimulation metric} over the POMDP, we prove that offline RL algorithms achieve greatly improved sample complexity.
Furthermore, when existing offline RL algorithms fail to learn such representations, we propose a novel modification that explicitly does so, by optimizing an auxiliary \emph{bisimulation loss} on top of standard offline RL objective. 
Empirically, we show -- in simple navigation and language model tasks --
that when na\"{i}ve offline RL algorithms fail, using our proposed loss in conjunction with these algorithms improves performance; furthermore, we also show that in tasks where existing offline RL approaches already succeed, our loss is implicitly being minimized. Our work provides, to our knowledge, the first theoretical treatment of representation learning in partially observed offline RL, and offers a step toward provably efficient RL in such settings.

\section{Related Work}
\label{sec:related_work}

\textbf{Offline RL.}
Offline RL~\citep{LangeGR12, levine2020offline} has shown promise in a range of domains. To handle distribution shift ~\citep{fujimoto2018off,kumar2019stabilizing}, many modern offline RL algorithms adopt a pessimistic formulation, learning a lower-bound estimate of the value function or Q-function~\citep{kumar2020conservative,kostrikov2021offline,kidambi2020morel,yu2020mopo,yu2021combo}. 
When they work properly, offline RL algorithms should benefit from ``trajectory stitching," or combining components of suboptimal trajectories in the data to make more optimal ones \citep{fu19diagnosing,d4rl}. 
From a theoretical perspective, multiple prior works show that pessimistic offline RL algorithms have near-optimal sample complexity, under assumptions on the affinity
between the optimal and behavior policies \citep{liu2020provably,rashidinejad2021bridging,xie2021policy,jin2021pessimism}.
Notably, \citet{xie2021policy} show that pessimistic offline RL algorithms can attain the information-theoretic lower-bound in tabular MDPs, and \citet{jin2021pessimism} show a similar result for linear MDPs. 
In our work, we consider offline RL where policies condition on observation histories. 

\textbf{POMDPs.}
Our work studies offline RL in POMDPs. A number of prior works on RL in POMDPs have proposed designing models, such as RNNs, that can process observation histories \citep{zhang2015policy,heess2015memory}. Other methods instead aim to learn a model of the environment, for example via spectral methods \citep{azizzadenesheli16reinforcement} or Bayesian approaches that maintains a belief state over the environment parameters \citep{ross2011bayesian,katt2018learning}. 
However, such approaches can struggle to scale to large state and observation spaces.
\citet{igl2018deep} propose approximately learning the belief state using variational inference, which scales to high-dimensional domains but does not have any theoretical guarantees.
To our knowledge, provably efficient offline RL methods for POMDPs are still relatively sparse in the literature.
Recently, \citet{jin2020sample} propose estimating the parameters of a tabular POMDP efficiently using the induced observable operator model \citep{jaegar2000oom}, under an undercompleteness assumption between the observations and hidden state.
\citet{guo22provably} propose and analyze a similar approach for linear POMDPs. 
However, these approaches share the same weaknesses as prior methods that rely on spectral methods in that they do not scale beyond linear domains. 
In our work, we analyze practical offline RL algorithms that work on general POMDPs, and show sufficient conditions on how they can be provably efficient, as well as propose a new algorithm that satisfies these conditions. 

\textbf{Representation learning in RL.}
Motivated by our theoretical analysis of the efficiency of na\"{i}ve history-based policies, we propose an approach for learning compact representations of observations histories to improve the efficiency of offline RL in POMDPs.
Multiple prior works consider state abstraction in MDPs, often by learning low-dimensional representations using reconstruction \citep{Hafner2019PlanNet,watter2015embed} or a contrastive loss \citep{oord2018representation}. Specifically, our work builds on \emph{bisimulation metrics} \citep{ferns2012metrics,castro2019scalable}, which identify equivalence classes over states based on rewards and transition probabilities. 
Recently, \citet{zhang2021learning} propose learning representations that follow bisimulation-derived state aggregation to improve deep RL algorithms, and \citet{kemertas2021towards} propose extensions that improve robustness. 
The main objective of our work is not to propose a new representation learning algorithm, but to identify when offline RL with observation histories can achieve efficient sample complexity in POMDPs.
To our knowledge, we are the first to provably show efficient offline RL in POMDPs using theoretical guarantees derived from representation learning.

\vspace{-0.1in}
\section{Preliminaries}
\vspace{-0.1in}
\label{sec:preliminaries}

The goal in our problem setting is to learn a policy $\pi$ that maximizes the expected cumulative reward in a partially observable Markov decision process (POMDP), given by a tuple $M = (\gS, \gA, \gO, \gT, r, \gE, \mu_1, H)$, where $\gS$ is the state space, $\gA$ is the action space, $\gO$ is the observation space, $\mu_1$ is the initial state distribution, and $H$ is the horizon. When action $a \in \gA$ is executed at state $s \in \gS$, the next state is generated according to $s' \sim \gT(\cdot | s, a)$, and the agent receives stochastic reward with mean $r(s, a) \in [0, 1]$. Subsequently, the agent receives an observation $o' \sim \gE(\cdot | s')$. 

Typically, POMDPs are defined with a state space representation; in practice though, these are notoriously difficult to define, and so instead we transform POMDPs into MDPs over observation histories -- henceforth called \emph{observation-history-MDPs} \citep{Timmer2010}. 
At timestep $h \in [H]$, we define the \emph{observation history} $\tau_h$ as the sequence of observations and actions $\tau_h = [o_1, a_1, o_2, \hdots, o_h]$.
Then, an observation-history-MDP is given by tuple $M = (\gH, \gA, P, r, \rho_1, H)$, where $\gH$ is the space of observation histories, and $\gA$ is the action space, $\rho_1$ is the initial observation distribution, and $H$ is the horizon. When action $a \in \gA$ is executed at $h \in \gH$, the agent observes $h' = h \oplus o'$ via $o' \sim P(\cdot | \tau, a)$, where $\oplus$ denotes concatenation, and receives reward with mean $r(\tau, a)$.

The Q-function $Q^\pi(\tau, a)$ for a policy $\pi(\cdot | \tau)$ represents the discounted long-term reward attained by executing $a$ given observation history $\tau$ and then following policy $\pi$ thereafter. 
$Q^\pi$ satisfies the Bellman recurrence: $Q^\pi(\tau, a) = \mathbb{B}^\pi Q^\pi(\tau, a) = r(\tau, a) + \E{h' \sim \gT(\cdot|\tau, a), a' \sim \pi(\cdot|\tau')}{Q_{h+1}(\tau', a')}$. The value function $V^\pi$ considers the expectation of the Q-function over the policy $V^\pi(h) = \E{a \sim \pi(\cdot | \tau)}{Q^\pi(\tau, a)}$. Meanwhile, the Q-function of the optimal policy $Q^*$ satisfies: $Q^*(\tau, a) = r(\tau, a) + \E{h' \sim \gT(\cdot|\tau, a)}{\max_{a'} Q^*(\tau', a')}$, and the optimal value function is $V^*(\tau) = \max_{a} Q^*(\tau, a)$. Finally, the expected cumulative reward is given by $J(\pi) = \E{o_1 \sim \rho_1}{V^\pi(\tau_1)}$. Note that we do not condition the Q-values nor policy on timestep $h$ because it is implicit in the length of $\tau$.

\begin{wrapfigure}{r}{0.6\textwidth}
\vspace{-0.1in}
  \begin{minipage}{\linewidth}
  \begin{center}
    \includegraphics[width=\linewidth]{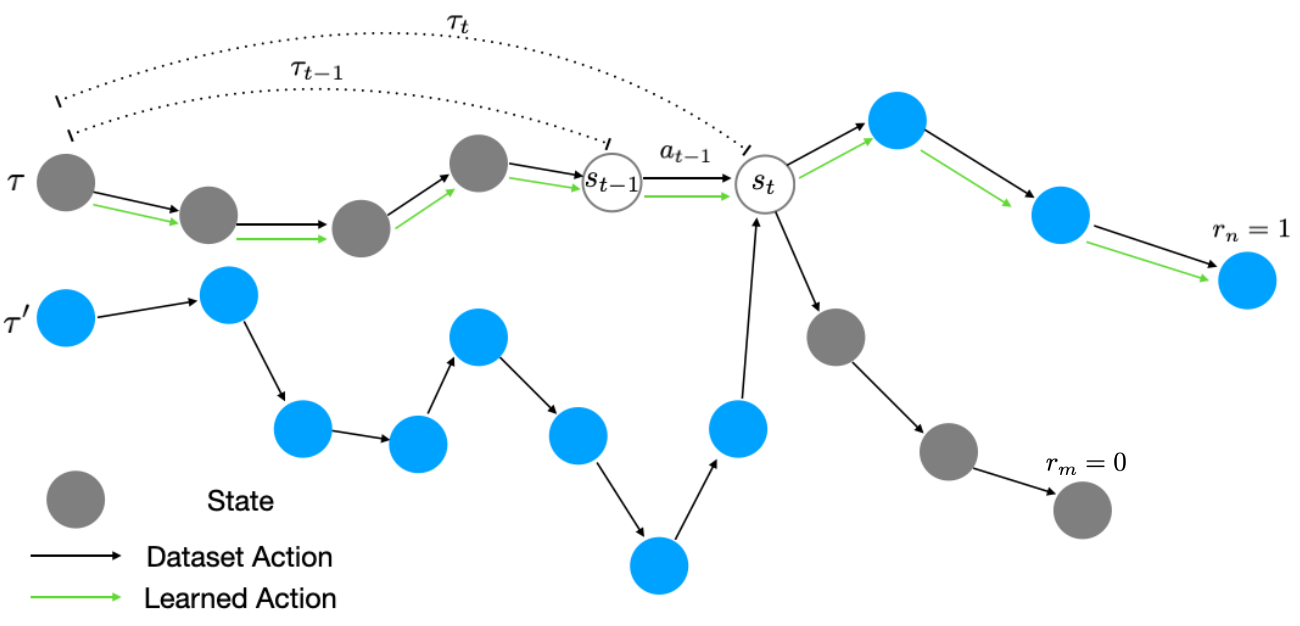}
  \end{center}
  \end{minipage}
  \caption{Illustrative example of trajectory stitching. Here, Q-learning is able to learn that though the grey trajectory $\tau$ was unsuccessful, a prefix $\tau_t$ of the trajectory is still optimal when stitched with the suffix of blue trajectory $\tau'$.}
  \label{fig:stitching}
  \vspace{-0.1in}
\end{wrapfigure}
In offline RL, we are provided with a dataset $\mathcal{D} = \{(\tau_i, a_i, r_i, o'_{i})\}_{i=1}^N$ of size $|\mathcal{D}| = N$.  We assume that the dataset $\mathcal{D}$ is generated i.i.d. from a distribution $\mu(\tau, a)$ that specifies the effective behavior policy $\pi_\beta(a|\tau) := \mu(\tau, a)/ \sum_{a} \mu(\tau, a)$. 
In our analysis, we will use $n(\tau, a)$ to denote the number of times $(\tau, a)$ appears in $\mathcal{D}$, and $\hat{P}(\cdot|\tau, a)$ and $\hat{r}(\tau, a)$ to denote the empirical dynamics and reward distributions in $\mathcal{D}$, which may be different from $P$ and $r$ due to stochasticity and sampling error. Finally, as in prior work \citep{rashidinejad2021bridging,kumar2022prefer}, we define the suboptimality of learned policy $\hat{\pi}$ as
$
\mathsf{SubOpt}(\hat{\pi}) = \E{\mathcal{D} \sim \mu}{J(\pi^*) - J(\hat{\pi})}. 
$

\textbf{Trajectory stitching.} Much of how offline RL can learn efficiently
lies in its capability to combine components of suboptimal trajectories to deduce better ones, which is called ``trajectory stitching".
We illustrate this in Figure~\ref{fig:stitching}, where a trajectory $\tau$ through state $s_{t-1}$ does not end in positive reward, but does share a common state $s_t$ with trajectory $\tau'$ that does.
In MDPs, offline RL using value iteration  will learn Q-values: $\hat{Q}(s_{t-1}, a_{t-1}) = \sum_{s'} P(s' | s_{t-1}, a_{t-1}) \hat{V}(s')$. Because $\hat{V}(s_t)$ is known to be positive
from observing $\tau'$, offline RL can deduce that taking action $a_{t-1}$ at $s_{t-1}$ also has positive value, without explicitly observing it in the dataset. This becomes complicated in an observation history MDP, as offline RL will now learn $\hat{Q}(\tau_{t-1}, a_{t-1}) = \sum_{s'} P(s' | s_{t-1}, a_{t-1}) \hat{V}(\tau_t)$. But $\hat{V}(\tau_t)$ is not known to be positive because $\tau_t$ has not been observed in the data.
This means that, na\"{i}vely, offline RL on observation history MDPs does not seem to benefit from trajectory stitching, which may negatively effect how efficiently it can learn from data. We formalize this in Section~\ref{sec:inefficiency} by proving that offline RL can have poor worst-case sample complexity in POMDPs. 

\textbf{Notation.}
Let $n \wedge 1 = \max\{n, 1\}$. Denote $\iota = \polylog(|\gO|, |\gA|, H, N)$. We let $\iota$ be a polylogarithmic quantity, changing with context.
For $d$-dimensional vectors $x, y$, $x(i)$ denotes its $i$-th entry, and define $\mathbb{V}(x, y) = \sum_i x(i) y(i)^2 - (\sum_i x(i) y(i))^2$. 

\section{Showing Inefficiency of Offline RL in Observation-History-MDPs}
\label{sec:inefficiency}
In this section, we show that existing offline RL algorithms with state-of-the-art sample complexity guarantees in standard MDPs have significantly worse guarantees in observation history MDPs. We consider a class of offline RL algorithms that learn pessimistic value functions such that the estimated value lower-bounds the true one, i.e., $\hat{V}^{\pi} \leq V^{\pi}$ for policy $\pi$. 
Practical implementations achieve this by subtracting a penalty from the reward, either explicitly~\citep{yu2020mopo,kidambi2020morel} or implicitly~\citep{kumar2020conservative,kostrikov2021offline}.
We only analyze one such algorithm that does the former, though our findings can likely be extended to general pessimistic offline RL methods. 

We consider a meta-algorithm called pessimistic value iteration (\pevi), originally introduced by \citet{jin2020pessimism}. This algorithm relies on the construction of confidence intervals $c: \gH \times \gA \to \mathbb{R}$ that are high-probability bounds on the estimation error of $\hat{P}, \hat{r}$. Then, pessimistic Q-values are obtained by solving the Bellman recurrence: 
$\hat{Q}(\tau, a) \leftarrow \hat{r}(\tau, a) - c(\tau, a) + \sum_{o'} \hat{P}(o' | \tau, a) \hat{V}(\tau')$, where values are $\hat{V}(\tau) \leftarrow \sum_{a} \hat{Q}(\tau, a) \hat{\pi}(a | \tau)$. 
The learned policy is then $\widehat{\pi}(\cdot | \tau) \leftarrow \argmax_\pi \sum_{a} \hat{Q}(\tau, a) \pi(a | \tau)$. 
We give a full pseudocode of the algorithm in Algorithm~\ref{alg:pevi} in Appendix~\ref{app:proof_naive}.

Prior work has shown that in tabular MDPs, instantiations of \pevi\ achieve state-of-the-art sample complexity \citep{rashidinejad2021bridging}. 
We choose one such instantiation, where confidence intervals $c(\tau, a)$ are derived using Bernstein's inequality: 
\begin{align}
\label{eqn:bonus_expression}
    c(\tau, a) ~\leftarrow \sqrt{\frac{H\mathbb{V}(\hat{P}(\cdot|\tau, a), \hat{V}(\tau \oplus \cdot)) \iota}{(n(\tau, a) \wedge 1)}} + \sqrt{\frac{H\hat{r}(\tau, a)\iota}{(n(\tau, a) \wedge 1)}} + \frac{H\iota}{(n(\tau, a) \wedge 1)}\,.
\end{align}
The same instantiation was considered by \citet{kumar2022prefer}, and shown to achieve sample complexity approaching the information-theoretic lower-bound. The additional dependence on $H$ is due to $\log|\gH| = H \polylog(|\gO|, |\gA|)$. However, we can show that in an observation history MDP, the same algorithm has much worse sample complexity bounds. 

We first characterizes the distribution shift between the offline dataset distribution $\mu(\tau, a)$ and the distribution induced by $\pi^*$, given by 
$d^*(\tau, a)$, via a \emph{concentrability coefficient} $C^*$.
\begin{definition}[Concentrability of the data distribution] 
Define $C^*$ to be the smallest finite constant that satisfies  $d^*(\tau, a)/\mu(\tau, a) \leq C^* ~~ \forall \tau \in \gH, a \in \gA$.
\end{definition}
Intuitively, the coefficient $C^*$ formalizes how well the data distribution $\mu(\tau, a)$ covers the tuples $(\tau, a)$ visited under the optimal $\pi^*$, where $C^* = 1$ corresponds to data from $\pi^*$ and increases with distribution shift. $C^*$ was first introduced by \citet{rashidinejad2021bridging} but for standard MDPs. Using $C^*$, we can derive the following sample-complexity bound for \pevi\ in an observation history MDP:
\begin{theorem}[Suboptimality of \pevi\ in Tabular POMDPs] 
In a tabular POMDP, the policy $\widehat{\pi}^*$ found by \pevi\ satisfies
\begin{align*}
    \mathsf{SubOpt}(\hat{\pi}) \lesssim \sqrt{\frac{C^* |\gH| H^3 \iota}{N}} + \frac{C^* |\gH| H^2 \iota}{N}.
\end{align*}
\label{thm:naive_subopt}
\vspace{-0.2in}
\end{theorem}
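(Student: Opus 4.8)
The cleanest route is to observe that the observation-history-MDP $M = (\gH, \gA, P, r, \rho_1, H)$ is itself a genuine finite-horizon tabular MDP whose ``states'' are the histories $\tau \in \gH$. Hence the entire claim reduces to running the known analysis of the Bernstein-bonus instantiation of \pevi\ in tabular MDPs \citep{rashidinejad2021bridging,kumar2022prefer} essentially verbatim, with the state cardinality $|\gS|$ replaced everywhere by $|\gH|$. The only place the substitution is not completely black-box is the covering/union-bound term: since $\log|\gH| = H\,\polylog(|\gO|,|\gA|)$, every concentration step over histories carries an extra factor of $H$ beyond the polylogarithmic $\iota$, which is exactly the explicit $H$ appearing inside the bonus in \eqref{eqn:bonus_expression} and the source of the extra horizon dependence relative to the fully observed bound. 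Below I outline the steps I would carry out to make this self-contained.

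First, I would establish the \emph{good event}. Conditioning on the counts $n(\tau,a)$ and applying Bernstein's inequality to $\hat r(\tau,a)$ and to $\sum_{o'}\hat P(o'|\tau,a)\hat V(\tau')$, then union bounding over all $|\gH|\,|\gA|$ history--action pairs and all $H$ steps, I would show that with probability at least $1-\delta$ the bonus dominates the one-step Bellman estimation error, i.e. $\big|(\hat r - r)(\tau,a) + \sum_{o'}(\hat P - P)(o'|\tau,a)\,\hat V(\tau')\big| \le c(\tau,a)$ for all $(\tau,a)$ simultaneously. This union bound over $\gH$ is precisely where $\log|\gH| = H\,\polylog$ enters.

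Second, on the good event I would prove \emph{pessimism} by backward induction on $h = H, \ldots, 1$: subtracting $c(\tau,a)$ in the Bellman update forces $\hat Q \le Q^\pi$ and hence $\hat V^\pi \le V^\pi$ for every $\pi$, in particular $\hat V \le V^*$. Third, I would invoke the standard \pevi\ value-difference decomposition \citep{jin2020pessimism}: pessimism cancels the contribution of the learned $\hat\pi$, leaving $\mathsf{SubOpt}(\hat\pi) \le \sum_{h=1}^H \E{(\tau,a)\sim d^*}{2\,c(\tau,a)}$, so the suboptimality is bounded by the expected confidence width evaluated along the \emph{optimal} occupancy $d^*$ rather than along the data distribution $\mu$.

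The final step turns this into the stated rate, and is where I expect the real work to lie. Using the concentrability bound $d^*(\tau,a) \le C^*\mu(\tau,a)$ together with a Chernoff bound ensuring $n(\tau,a) \gtrsim N\mu(\tau,a)$ on adequately covered pairs, the dominant variance term becomes $\sum_h \sum_{\tau,a} d^*(\tau,a)\sqrt{H\,\mathbb{V}(\hat P(\cdot|\tau,a),\hat V(\tau\oplus\cdot))\,\iota\,C^*/(N d^*(\tau,a))}$. I would then apply Cauchy--Schwarz over the $|\gH|\,|\gA|$ support to extract the $\sqrt{|\gH|}$ factor, and the law of total variance to bound the summed conditional variances $\sum_h \E{d^*}{\mathbb{V}}$, which collapses the per-step variances into a single factor polynomial in $H$; combined with a second Cauchy--Schwarz across the $H$ timesteps and the covering factor, the horizon powers collect into the leading $\sqrt{C^*|\gH|H^3\iota/N}$ term, while the reward and $H\iota/(n\wedge1)$ pieces of the bonus contribute the additive $C^*|\gH|H^2\iota/N$. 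The main obstacle is exactly this bookkeeping: one must keep the dependence on $C^*$ to a single power under the square root (rather than $C^*$ times $\sqrt{C^*}$), handle the $n\wedge1$ clipping for histories visited few or zero times, and track the horizon exponents carefully through the variance recursion. Everything else is a faithful transcription of the fully observed tabular analysis with $|\gS|\mapsto|\gH|$, and the content of the theorem is precisely that $|\gH|$ is exponentially larger than $|\gS|$, reflecting the failure of stitching at the level of histories.
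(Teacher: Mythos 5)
Your proposal follows essentially the same route as the paper's proof: a Bernstein-based good event with a union bound over $\gH \times \gA$, pessimism by backward induction on the horizon, the value-difference decomposition of \citet{jin2020pessimism}, and a Cauchy--Schwarz plus variance-recursion argument under the concentrability assumption, all obtained by transcribing the tabular analysis with $|\gS|$ replaced by $|\gH|$ (and the separate handling of zero-count histories that you anticipate with the $n\wedge 1$ clipping appearing in the paper as the $\Delta_2$ term). The one technical point you gloss over is that Bernstein cannot be applied directly to $(\hat P - P)\cdot\hat V$ because $\hat V$ is estimated from the same data as $\hat P$; the paper resolves this dependence with the absorbing-MDP argument of \citet{agarwal2020model}, but otherwise your outline matches the paper's proof step for step.
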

We defer our proof, which follows from adapting existing analysis from standard MDPs to observation history MDPs, to Appendix~\ref{app:proofs}.
Note that dependence on $|\gH|$ makes the bound exponential in the horizon because the space of observation histories satisfies $|\gH| > |\gO|^H$. 
This term arises due to encountering observation histories that do not appear in the dataset;
without additional assumptions on the ability to generalize to unseen histories, any offline RL algorithm must incur this suboptimality (as it can only take actions randomly given such histories), making the above bound tight.
\vspace{-0.1in}
\section{Analyzing When Sample-Efficiency Can Be Improved}
\label{sec:efficiency}
\vspace{-0.1in}
In this section, we show how the efficiency of offline RL algorithms can be improved by learning \emph{representations} of observation histories, containing features of the history that sufficiently capture what is necessary for action selection. 
We then provide one method for learning such representations based on \emph{bisimulation metrics} that, when used alongside existing offline RL algorithms, is sufficient to greatly improve their sample complexity guarantees in observation-hisotory MDPs.

Intuitively, consider that observation histories likely contains mostly irrelevant or redundant information. This means that it is possible to learn \emph{summarizations}, such that instead of solving the observation history MDP, it is sufficient to solve a \emph{summarized MDP} where the states are summarizations, actions are unchanged, and the dynamics and reward function are parameterized by the summarizations rather than observation histories.
We formalize our intuition into the following:
\begin{assumption}
There exists a set $\mathcal{Z}$ where $|\mathcal{Z}| \ll |\gH|$, and $\varepsilon > 0$, such that the summarized MDP $(\mathcal{Z}, \gA, P, r, \rho_1, H)$ satisfies: for every $\tau \in \gH$ there exists a $z \in \gZ$ satisfying
$
    \abs{V^*(\tau) - V^*(z)} \leq \varepsilon\,.
$
\label{ass:compact_latent}
\end{assumption}
The implication of Assumption~\ref{ass:compact_latent} is that we can abstract the space of observation histories into a much more compact space of summarizations, containing only features of the history relevant for action selection.
If the state space was known, then summarizations could be constructed as beliefs over the true state.
In our case, one practical way of creating summarizations is by aggregating observation histories using the distances between their learned representations.
Note that these representations may be implicitly learned by optimizing the standard offline RL objective, or they can be explicitly learned via an auxiliary representation learning objective.
We describe one possible objective in the following section, which enjoys strong theoretical guarantees.

\vspace{-0.1in}
\subsection{Abstracting Observation Histories using Bisimulation Metrics}
\vspace{-0.1in}
\emph{Bisimulation metrics} offer one avenue for learning abstractions of the observation history~\citep{ferns2012metrics,castro2019scalable}. While they are not the only way of learning useful representations, these metrics offer strong guarantees for improving the efficiency of learning in standard MDPs, and are also empirically shown to work well with popular off-policy RL algorithms \citep{zhang2021learning}.
In contrast, we leverage learning bisimulation metrics
and show that they can similarly improve the theoretical and empirical performance of offline RL algorithms in observation-history MDPs.

Formally, we define the \emph{on-policy bisimulation metric} for policy $\pi$ on an observation-history-MDP as
\begin{align}
    d^\pi(\tau, \tau') = \abs{r^\pi(\tau) - r^\pi(\tau')} + W_1\left(P^\pi(\cdot \mid  \tau), P^\pi(\cdot \mid \tau') \right)\,,
\end{align}
where we superscript the reward and transition function by $\pi$ to indicate taking an expectation over $\pi$.
To simplify notation, let $d^* = d^{\pi^*}$ be shorthand for the \emph{$\pi^*$-bisimulation metric}.

Rather than using the true bisimulation metric, \citet{zhang2021learning} showed that it can be more practical to learn an approximation of it in the embedding space.
Similarly, we propose learning an encoder $\phi:\gH \to \mathbb{R}^d$ such that distances $\hat{d}_{\phi}(\tau, \tau') = ||\phi(\tau) - \phi(\tau')||^2_2$ approximate the distance under the  $\pi^*$-bisimulation metric $d^{*}(\tau, \tau')$. Such an encoder can be learned implicitly by minimizing the standard offline RL objective, or explicitly by via an auxilliary MSE objective: 
$
    \phi = \arg\min ||\hat{d}_{\phi} - d^*||^2_2 
$.

Then, the encoder can be used to compact the space of observation histories $\gH$ into a space of summarizations $\mathcal{Z}$ by introducing an \emph{aggregator}
$\Phi: \gH \to \mathcal{Z}$ that maps observation histories to summarizations.
Specifically, the aggregator will cluster observation histories that are predicted to be similar under our learned bisimulation metric, i.e., $\Phi(\tau) = \Phi(\tau')$ for $\tau, \tau' \in \gH$ if $\hat{d}_\phi(\tau, \tau') \leq \varepsilon$. 
This means that we can approximate the current observation history MDP with a \emph{summarized MDP} $(\mathcal{Z}, \gA, P, r, \rho_1, H)$. 
Any practical offline RL algorithm can be used to solve for the policy $\hat{\pi}$ on the summarized MDP, and the policy can be easily evaluated on the original POMDP by selecting actions according to $\hat{\pi}(\cdot \mid \Phi(\tau))$.
In the following section, we show that doing so yields greatly improved sampled complexity guarantees in the original POMDP.  

\vspace{-0.1in}
\subsection{Theoretical Analysis}
\vspace{-0.1in}
In Section~\ref{sec:inefficiency}, we showed that applying a na\"{i}ve pessimistic offline RL algorithm (\pevi), which has optimal sample complexity in standard MDPs, to observation-history-MDPs can incur suboptimality that scales very poorly (potentially exponentially) with horizon $H$. Here, we show that applying the same algorithm to a summarized MDP, which aggregates observation histories based on how similar their learned representations are, can achieve greatly improved sample-complexity guarantees in the original observation-history-MDP, if the representations induce a bisimulation metric.

The first result we show relates the value functions under the original observation-history-MDP and a summarized MDP induced via the summarization function $\Phi$:
\begin{lemma}
Let $\Phi : \gH \to \mathcal{Z}$ be a learned aggregator that clusters observation histories such that $\Phi(\tau) = \Phi(\tau') \Rightarrow \hat{d}_\phi(\tau, \tau') \leq \varepsilon$. Then, the induced summarized MDP $(\mathcal{Z}, \gA, P, r, \rho_1, H)$ satisfies
{\small{
\begin{align*}
    \abs{V^*(\tau) - V^*(\Phi(\tau))} \leq H\left(\varepsilon + \norm{\hat{d}_{\phi} - d^*}_\infty\right).
\end{align*}
}}
\vspace{-0.2in}
\label{lem:aggregation}
\end{lemma}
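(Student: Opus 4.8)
The plan is to reduce this cross-MDP value comparison to a one-step model-mismatch bound that is then accumulated over the horizon, with the mismatch at each step controlled by the within-cluster bisimulation distance. Throughout I abbreviate $\delta := \varepsilon + \norm{\hat d_\phi - d^*}_\infty$.

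First I would convert the embedding-distance clustering guarantee into a guarantee on the true $\pi^*$-bisimulation metric. For any $\tau, \tau'$ with $\Phi(\tau) = \Phi(\tau')$ we have $\hat d_\phi(\tau,\tau') \le \varepsilon$, so adding and subtracting $\hat d_\phi$ and using the uniform bound gives $d^*(\tau,\tau') \le \hat d_\phi(\tau,\tau') + \norm{\hat d_\phi - d^*}_\infty \le \delta$. Thus every cluster has true bisimulation diameter at most $\delta$, which by the definition of $d^*$ means that histories collapsed together have rewards $r^{\pi^*}$ differing by at most the reward part of $\delta$ and next-step distributions that are correspondingly close in the $W_1$ sense. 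This makes the summarized MDP a $\delta$-accurate model of the original: its reward $r(z,a)$ and transition $P(\cdot\mid z,a)$, taken as those of a cluster representative (equivalently the $\Phi$-pushforward), differ from the true per-history quantities by at most the reward- and transition-parts of $\delta$.

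Next I would run a backward induction over the horizon on $g_h := \sup_\tau \abs{V^*_h(\tau) - V^*_h(\Phi(\tau))}$, with $g_{H+1}=0$. Bellman-expanding both value functions and matching the maximizing action, the per-step error splits into the reward mismatch, the mismatch from transporting probability mass between the true and summarized transitions, and the recursive summarization error $g_{h+1}$ incurred at the next layer. The first two terms are exactly the reward and $W_1$ contributions appearing in $d^*$, hence jointly bounded by the cluster diameter $\delta$; this yields the recursion $g_h \le \delta + g_{h+1}$, which unrolls from $g_{H+1}=0$ to $g_1 \le H\delta = H\left(\varepsilon + \norm{\hat d_\phi - d^*}_\infty\right)$, giving the claim.

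The main obstacle is the transition term: to convert the Wasserstein distance between next-history distributions into a bound on the difference of expected downstream values, I need $V^*_{h+1}$ to be $1$-Lipschitz with respect to the ground metric underlying $W_1$. In the bisimulation framework this Lipschitzness holds because $d^*$ is the fixed point of the bisimulation operator and the optimal value is non-expansive under it; making this rigorous in the finite-horizon, observation-history setting — and in particular ensuring the reward and Wasserstein contributions are accounted together as a single $\delta$ per step rather than separately (which would inflate the constant to $2H$) — is the delicate bookkeeping. The boundedness $r \in [0,1]$, which gives $V^* \in [0,H]$, is what keeps all these quantities finite and is consistent with the final $H$ scaling.
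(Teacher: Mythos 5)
Your proposal is correct and follows essentially the same route as the paper's proof: a one-step Bellman decomposition in which the per-step error is the within-cluster reward gap plus the $W_1$ transport term, jointly bounded by the cluster's true bisimulation diameter $\varepsilon + \norm{\hat{d}_\phi - d^*}_\infty$ via the triangle inequality, with the $W_1$ term controlled by the value-difference lemma $\abs{V^\pi(\tau) - V^\pi(\tau')} \leq d^\pi(\tau,\tau')$ (the Lipschitz property you flag as the delicate point). The only difference is bookkeeping: you unroll an explicit backward induction $g_h \leq \delta + g_{h+1}$, while the paper closes the recursion through a self-consistent supremum inequality with a $\tfrac{H-1}{H}$ factor; both yield the $H\delta$ scaling.
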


Next, we show an improved sample complexity bound than Theorem~\ref{thm:naive_subopt} in a tabular MDP.
We consider the same instantiation of \pevi\ as in Section~\ref{sec:inefficiency}. However, rather than operating on the raw observation history $\tau$, we use the summarization function $\Phi(\tau)$
obtained by learning a bisimulation metric over the space of histories $\gH$. We can show that operating on the space of summarizations $\mathcal{Z}$ instead of the observation histories $\gH$ leads to the following greatly improved bound:
\begin{theorem}[Suboptimality of \pevi\ augmented with $\Phi$ in Tabular POMDPs] 
In a tabular POMDP, the policy $\widehat{\pi}$ found by \pevi\ on the summarized MDP $(\mathcal{Z}, \gA, P, r, \rho_1, H)$ satisfies
{\small{
\begin{align*}
    \mathsf{SubOpt}(\hat{\pi}) \lesssim \sqrt{\frac{C^* |\mathcal{Z}| H^3 \iota}{N}} + \frac{C^* |\mathcal{Z}| H^2 \iota}{N} + 2H\left(\varepsilon + \norm{\hat{d}_{\phi} - d^*}_\infty\right)\,.
\end{align*}
}}
\label{thm:improved_subopt}
\vspace{-0.2in}
\end{theorem}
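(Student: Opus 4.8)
The plan is to bound the suboptimality in the \emph{original} observation-history-MDP by splitting it into a statistical error incurred by running \pevi\ on the summarized MDP and an approximation error incurred by replacing the POMDP with its summarization. Write $\tilde{V}^\pi$ and $\tilde{J}$ for value functions and returns computed in the summarized MDP $(\gZ, \gA, P, r, \rho_1, H)$, and note that $\hat{\pi}$ is by construction measurable with respect to $\Phi$, i.e.\ it selects actions from $\hat{\pi}(\cdot \mid \Phi(\tau))$. For each initial history $\tau_1$ I would telescope through the summarized-MDP value functions:
\[
    V^*(\tau_1) - V^{\hat{\pi}}(\tau_1) = \big[V^*(\tau_1) - \tilde{V}^*(\Phi(\tau_1))\big] + \big[\tilde{V}^*(\Phi(\tau_1)) - \tilde{V}^{\hat{\pi}}(\Phi(\tau_1))\big] + \big[\tilde{V}^{\hat{\pi}}(\Phi(\tau_1)) - V^{\hat{\pi}}(\tau_1)\big],
\]
and then take the expectation over $\tau_1 \sim \rho_1$ to recover $\mathsf{SubOpt}(\hat{\pi})$.

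The first bracket is exactly the quantity controlled by Lemma~\ref{lem:aggregation}, so it is at most $H(\varepsilon + \norm{\hat{d}_\phi - d^*}_\infty)$. The middle bracket, in expectation, is the suboptimality of $\hat{\pi}$ \emph{within} the summarized MDP, and here I would invoke Theorem~\ref{thm:naive_subopt} essentially verbatim: the summarized MDP is itself a tabular MDP, now with state space of cardinality $|\gZ|$ rather than $|\gH|$, and \pevi\ is run on the pushforward dataset (each $(\tau_i,a_i,r_i,o'_i)$ mapped through $\Phi$). I would check that the ingredients of that theorem transfer, namely the empirical counts, the Bernstein confidence intervals of \eqref{eqn:bonus_expression}, and the pessimism property, all of which depend only on the tabular structure. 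I would also verify that the concentrability coefficient does not worsen, since for a cluster $\{\tau : \Phi(\tau)=z\}$ the aggregated ratio $d^*(z,a)/\mu(z,a)$ is a $\mu$-weighted average of the per-history ratios $d^*(\tau,a)/\mu(\tau,a) \le C^*$ and is therefore still bounded by $C^*$. This yields the $\sqrt{C^*|\gZ|H^3\iota/N} + C^*|\gZ|H^2\iota/N$ terms.

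The third bracket is the simulation error of evaluating the learned policy $\hat{\pi}$ in the POMDP versus in the summarized MDP, and this is the step I expect to be the main obstacle. Lemma~\ref{lem:aggregation} as stated controls only the \emph{optimal} values $V^*$, whose clustering is justified by the $\pi^*$-bisimulation metric $d^*$; to bound $\tilde{V}^{\hat{\pi}}(\Phi(\tau)) - V^{\hat{\pi}}(\tau)$ I need the analogous statement for the on-policy value of $\hat{\pi}$. The key point is that the aggregator groups histories $\tau,\tau'$ only when $\hat{d}_\phi(\tau,\tau') \le \varepsilon$ with $\hat{d}_\phi \approx d^*$, so within a cluster the one-step rewards and the next-summarization transition distributions agree up to $\varepsilon + \norm{\hat{d}_\phi - d^*}_\infty$ in the relevant $W_1$ sense; since $\hat{\pi}$ acts identically on all histories in a cluster, an $H$-step induction (a simulation-lemma argument mirroring the proof of Lemma~\ref{lem:aggregation}, with $\hat{\pi}$ in place of $\pi^*$) propagates this per-step mismatch into a bound of $H(\varepsilon + \norm{\hat{d}_\phi - d^*}_\infty)$ on the value gap. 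The subtlety I would have to address carefully is that $d^*$ is specifically the $\pi^*$-bisimulation metric, so I must confirm that the reward and transition closeness it certifies is strong enough to control the dynamics seen under a different, merely $\Phi$-measurable, policy; establishing this uniform-over-$\Phi$-measurable-policies closeness, and confirming it costs only one additional factor of $H(\varepsilon + \norm{\hat{d}_\phi - d^*}_\infty)$, is precisely what produces the factor of $2$ in the final bound. Summing the three brackets and taking the expectation over $\tau_1 \sim \rho_1$ then gives the claimed inequality.
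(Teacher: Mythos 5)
Your decomposition is exactly the one the paper uses: Lemma~\ref{lem:aggregation} controls the approximation error (applied once for $V^*$ and once for the learned policy's value, giving the factor of $2H(\varepsilon + \norm{\hat{d}_{\phi} - d^*}_\infty)$), and the statistical term is obtained by rerunning the Theorem~\ref{thm:naive_subopt} argument with $|\gH|$ replaced by $|\gZ|$. You are in fact more careful than the paper's own one-line proof, which silently applies Lemma~\ref{lem:aggregation} to $V^{\hat{\pi}}$ even though that lemma is stated only for $V^*$; the on-policy simulation argument you sketch for the third bracket, and your check that $C^*$ does not degrade under aggregation, address gaps the paper leaves implicit.
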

Again, we defer full proofs to Appendix~\ref{app:proofs}. Here, we see that rather than exponential scaling in horizon $H$, offline RL now enjoys near optimal scaling, particularly if $|\mathcal{Z}| \ll |\gH|$.  


\vspace{-0.1in}
\section{Practical Approach to Improving Offline RL Algorithms}
\vspace{-0.1in}
\label{sec:algorithm}

As described in Section~\ref{sec:efficiency}, the key component that enables sample-efficient offline RL is the existence of an encoder $\phi: \gH \to \mathbb{R}^d$ that learns compact representations of observation histories.
Specifically, we showed that if the distances between representations under the encoder \mbox{$\hat{d}_\phi(\tau, \tau') = ||\phi(\tau) - \phi(\tau')||^2_2$} match the $\pi^*$-bisimulation metric, offline RL algorithms that leverage these representations enjoy better efficiency when required to condition on observation histories.

Note that the bound in Theorem~\ref{thm:naive_subopt} is a worst-case result. In the general case, even na\"{i}ve offline RL algorithms might still \emph{naturally} learn encoders $\phi$ as part of the standard training process that produce useful representations.
We show in Section~\ref{sec:efficiency} that one way of measuring the effectiveness of the representations is by how well they induce a bisimulation metric.  
In fact, in our experiments in Section~\ref{sec:experiments}, we will show that measuring $||\hat{d}_\phi - d^*||^2_2$ is often indicative of effective stitching and offline RL performance, even when running existing, unmodified offline RL algorithms. 
However, we also show in Section~\ref{sec:experiments} that this is not guaranteed to occur.

\begin{wrapfigure}{l}{0.55\textwidth}
\vspace{-0.2in}
\begin{minipage}{0.55\textwidth}
\begin{algorithm}[H]
\begin{small}
  \caption{Offline RL with Bisimulation Learning}\label{alg:bisimulation_rl}
  \begin{algorithmic}[1]
    \REQUIRE Offline dataset $\mathcal{D}$
    \STATE Initialize encoders $\phi, \bar{\phi}$
    \FOR{$i=1, 2, \hdots$}
        \STATE \textcolor{red}{Train encoder $\phi \leftarrow \phi - \eta \nabla_\phi J(\phi)$}
        \STATE Train dynamics $\hat{r}_\phi, \hat{P}_\phi$
        \STATE Train policy $\hat{\pi}_\phi$
        \STATE \textcolor{red}{Update $\bar{\phi} \leftarrow (1 - \alpha) \bar{\phi} + \alpha \phi$}
    \ENDFOR
    \STATE Return $\hat{\pi}_\phi$
  \end{algorithmic}
\end{small}
\end{algorithm}
\end{minipage}
\vspace{-0.2in}
\end{wrapfigure}
Therefore, we also propose a way to practically improve offline RL algorithms by explicitly training the encoder $\phi$ to induce a bisimulation metric. 
Note that in practice, we cannot na\"{i}vely fit $\hat{d}_\phi$ to the $\pi^*$-bisimulation metric $d^*$, because it assumes knowledge of: (1) the true reward function $r$ and observation dynamics $P$ of the environment, and (2) the optimal policy $\pi^*$. To remedy this, we propose a practical algorithm similar to the one proposed by \citet{zhang2021learning}, where an encoder $\phi$ and policy $\hat{\pi}_\phi$, operating on the embeddings, are trained jointly. To resolve (1), we fit a reward and dynamics model $\hat{r}_\phi, \hat{P}_\phi$ using dataset $\mathcal{D}$ and use it instead of the ground truth models. Then, to resolve (2), we use the learned policy $\hat{\pi}_\phi$ rather than optimal $\pi^*$, which intuitively should converge to $\pi^*$.

Formally, given the current learned policy $\hat{\pi}_\phi$ with encoder $\phi$, we train $\phi$ with the \emph{bisimulation loss} on top of the regular offline RL objective, using the following loss function:
\begin{align*}
J(\phi) = \E{\substack{\tau, \tau' \sim \mathcal{D}, a \sim \hat{\pi}(\cdot|z)\\ a' \sim \hat{\pi}(\cdot|z')}}{\left(\norm{\phi(\tau) - \phi(\tau')} - \abs{\hat{r}(z, a) - \hat{r}(z', a')} - D(\hat{P}(\cdot \mid z, a), \hat{P}(\cdot \mid z', a')\right)^2} ,
\end{align*}
where $z = \bar{\phi}(\tau), z' = \bar{\phi}(\tau')$ are the representations from a target network. We choose $D$ to be an approximation of the 1-Wasserstein distance; in discrete observation settings, we use total variation $||\hat{P}(\cdot | z, a) - \hat{P}(\cdot | z', a')||_1$, and in continuous settings, we use $W_2(\hat{P}(\cdot | z, a), \hat{P}(\cdot | z', a'))$ on Gaussian distributions. Then, we perform policy improvement on $\hat{\pi}$, which conditions on representations generated by $\phi$. We detail pseudocode for the meta-algorithm in Algorithm~\ref{alg:bisimulation_rl}. Note that the meta-algorithm is agnostic to how the policy $\hat{\pi}_\phi$ is trained, which can be any existing algorithm. 


\section{Experiments}
\label{sec:experiments}

Our experimental evaluation aims to empirically analyze the relationship between the performance of offline RL in partially observed settings and the bisimulation loss we discussed in Section~\ref{sec:algorithm}. Our hypothesis is that, if na\"{i}ve offline RL performs poorly on a given POMDP, then adding the bisimulation loss should improve performance, and if offline RL already does well, then the learned representations should \textbf{already} induce a bisimulation metric, and thus a low value of this loss. Note that our theory does not state that na\"{i}ve offline RL will always perform poorly, just that it has a poor worst-case bound, so we would not expect an explicit bisimulation loss to always be necessary, though we hypothesize that successful offline RL runs might still minimize loss as a byproduct of successful learning when they work well. We describe the main elements of each evaluation in the main paper, and defer implementation details to Appendix~\ref{app:experiment_details}. 

\vspace{-0.1in}
\subsection{Tabular Navigation}
\vspace{-0.1in}

\begin{figure}
 \begin{minipage}{\linewidth}
 \begin{center}
  \includegraphics[width=0.27\linewidth]{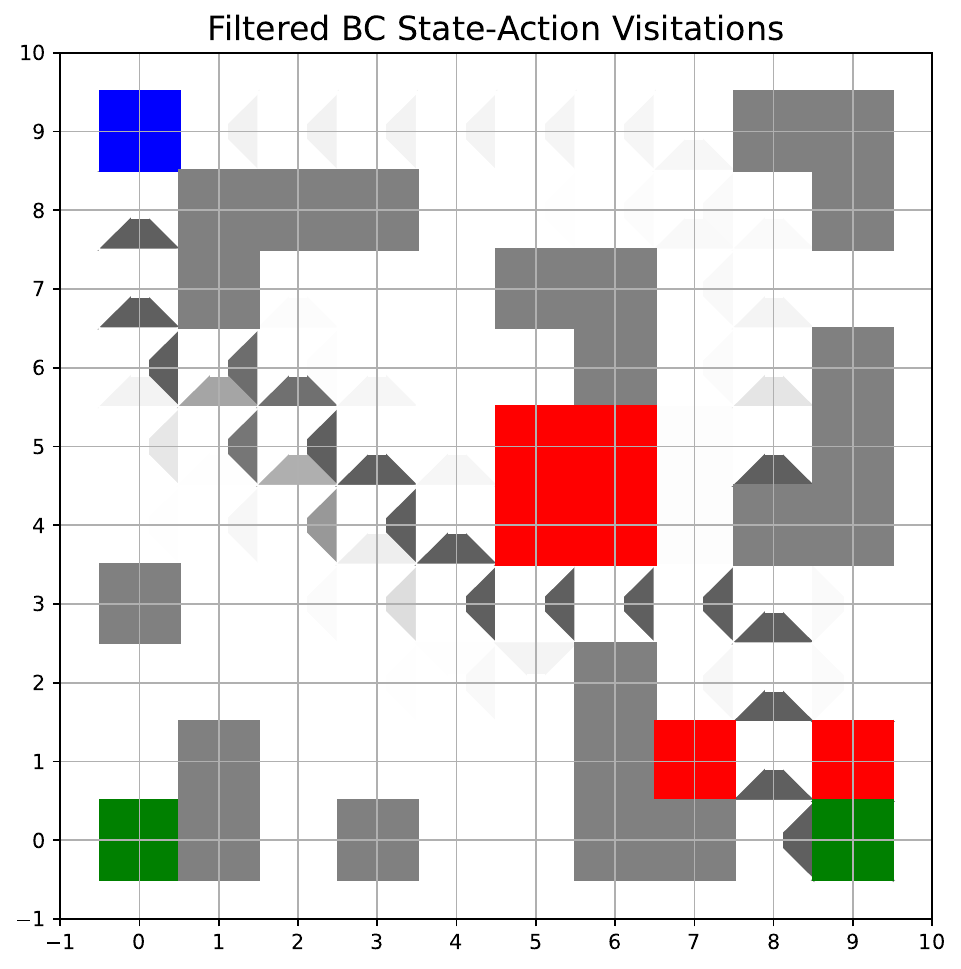}
  \includegraphics[width=0.27\linewidth]{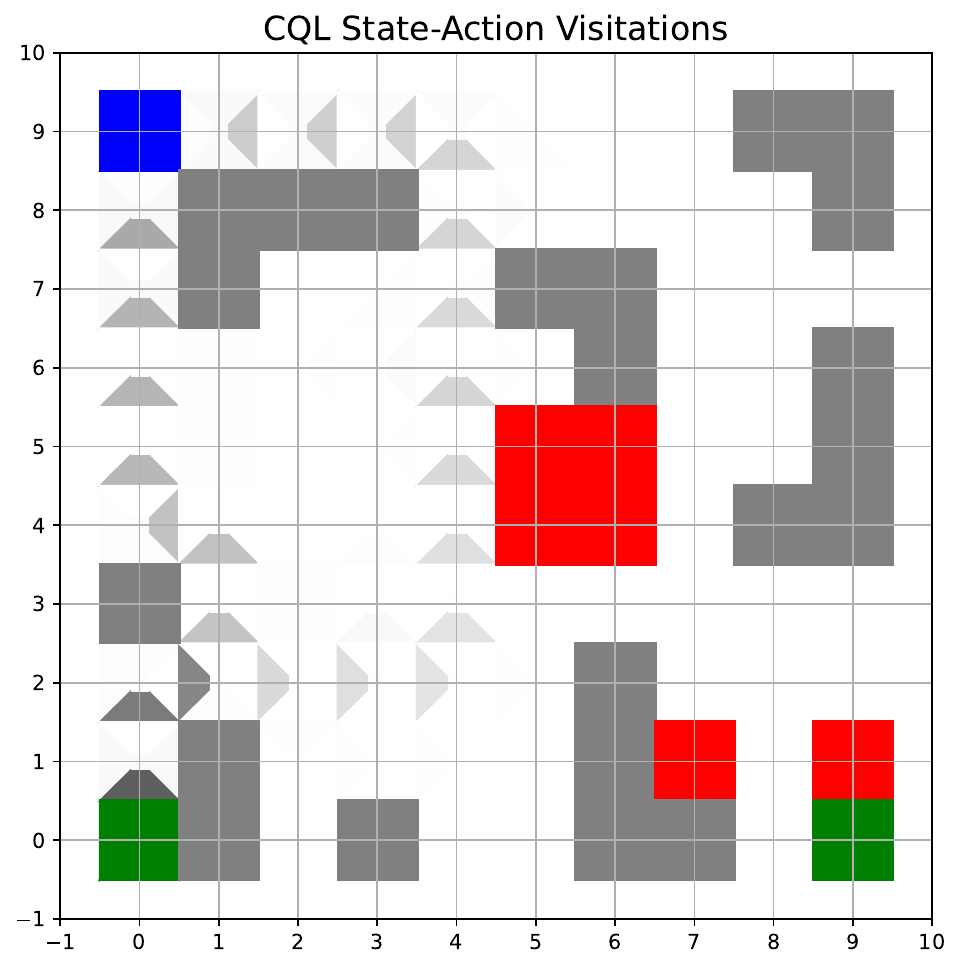}
  \includegraphics[width=0.27\linewidth]{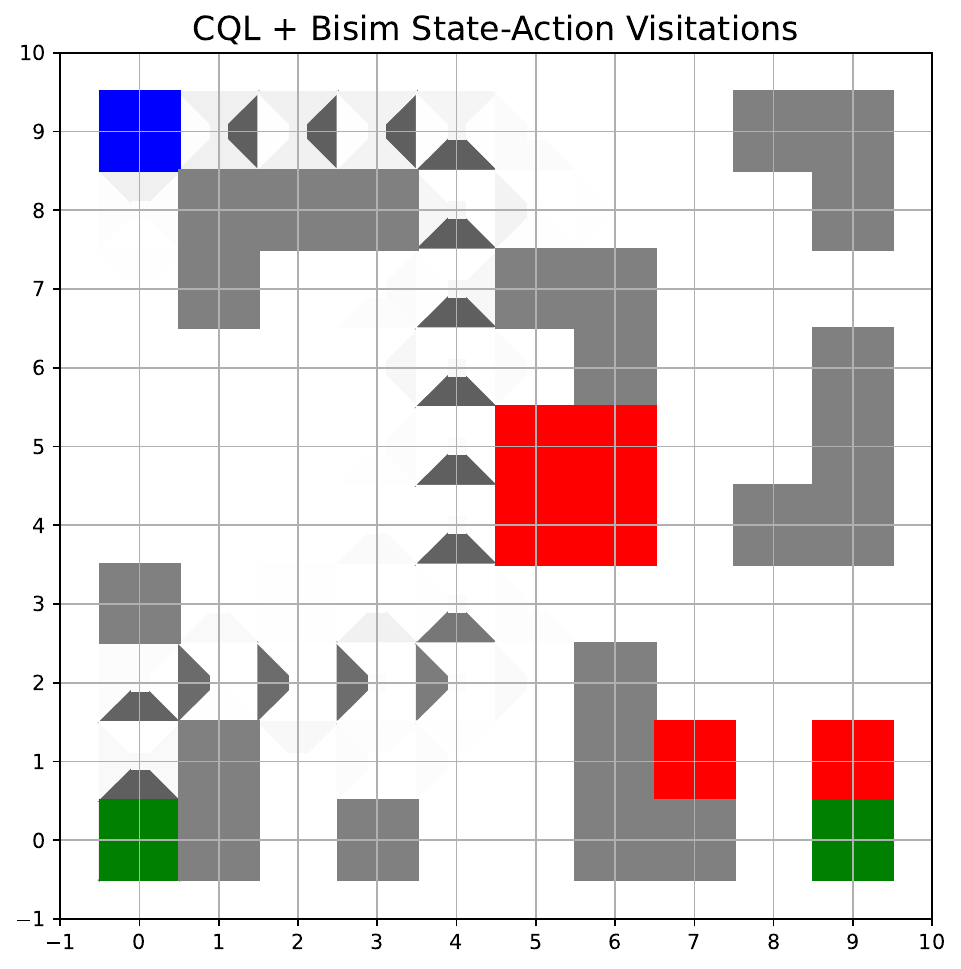}
\end{center}
\end{minipage} \hspace{5em} \\
\begin{minipage}{\linewidth}
\begin{center}
\small{
\begin{tabular}{ c | p{3cm} | p{3cm} | p{3cm} }
\toprule
 \textbf{Method} & Filtered BC & CQL &CQL + Bisimulation \\ \midrule
 \textbf{Mean Reward} & $0.28$  & $0.19$ & $\mathbf{0.68}$ \\
\bottomrule
 \end{tabular}
 }
 \end{center}
\end{minipage}
\caption{In our gridworld environment, Filtered BC takes the path towards the unsafe goal, CQL tries to take the path towards the safe goal but often incorrectly (by going down instead of right), and CQL with bisimulation loss always takes the correct path towards the safe goal.}
\label{fig:gridworld_example}
\vspace{-0.2in}
\end{figure}

\begin{wrapfigure}{l}{0.25\textwidth}
\vspace{-0.25in}
  \begin{minipage}{\linewidth}
  \begin{center}
    \includegraphics[width=\linewidth]{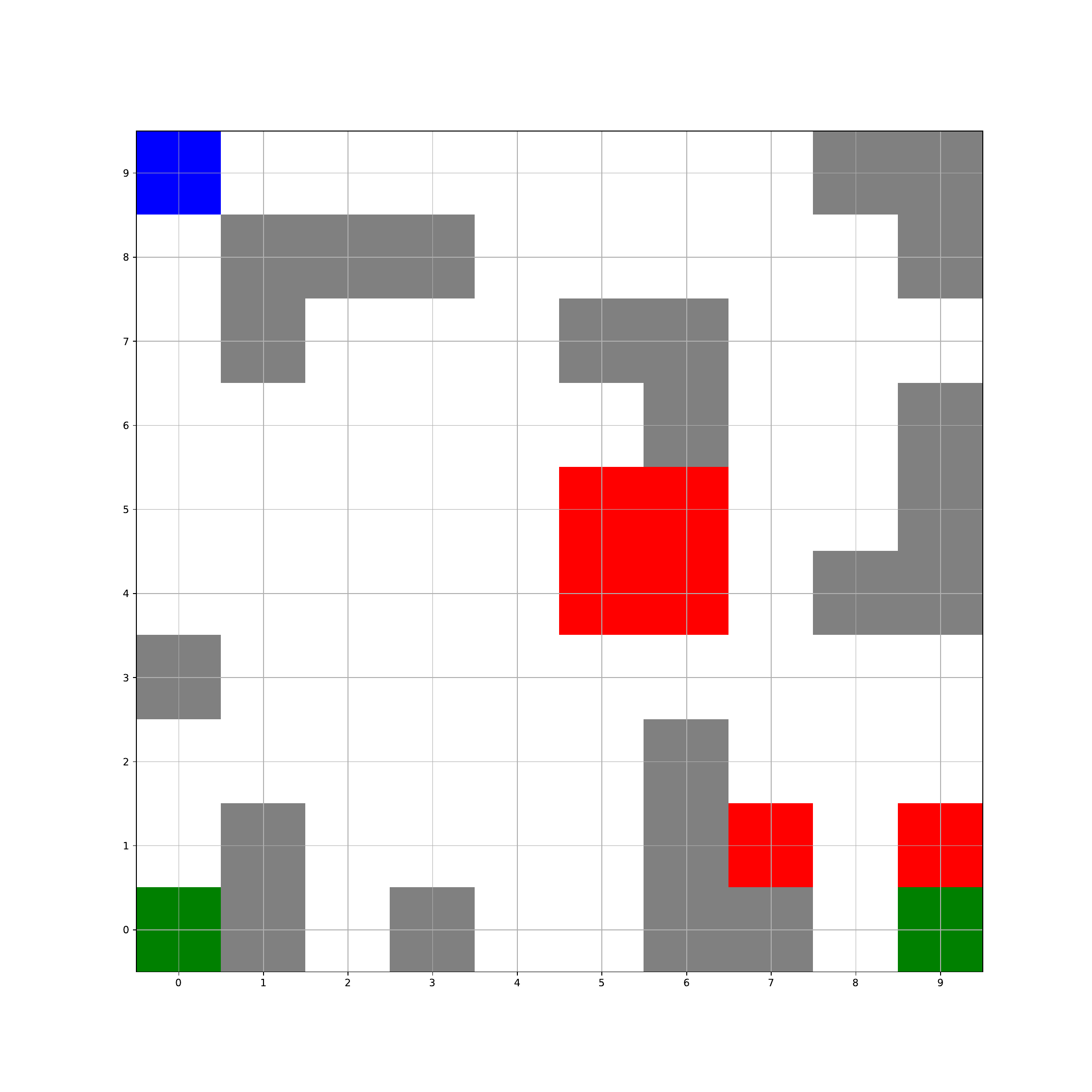}
  \end{center}
  \end{minipage}
  \label{fig:gridworld_layout}
  \vspace{-0.25in}
\end{wrapfigure}
We first evaluate our hypothesis in a task involving navigation in a $10 \times 10$ tabular environment similar to gridworld~\citep{fu2019diagnosing}.
Like gridworld, the environment we consider contains a start (blue) and goal (green) state, and walls (grey) and lava (red) placed in between. We consider a sparse reward where the agent earns a reward of $1$ upon reaching the goal state; however, if the agent reaches a lava state, then its reward is $0$ for the rest of the trajectory. The agent is able to move in either of the four directions (or choose to stay still). To introduce stochasticity in the transition dynamics, there is a $20\%$ chance that the agent travels in a different direction (that is uniformly sampled) than commanded. Finally, the horizon of each episode is $H = 50$. 
Unlike conventional gridworld, the location of the goal state in our environment changes depending on what states the agent visits earlier in the trajectory. 
The specific layout is shown on the left. If the agent takes downwards path from the start state, they will trip a switch that turns the goal into the state in the lower right surrounded by lava; conversely, if the agent takes the rightward path, they trip a switch that turns the goal into the state in the lower left. Because the location of the goal state is unknown and depends on past behavior, it must be inferred from the observation history of the agent. Because the goal state in the lower left is ``safe" (i.e. not surrounded by lava), an optimal agent should intentionally trip the switch by going right.

We construct a dataset of size $|\mathcal{D}| = 5,000$ where $50\%$ of trajectories come from a policy that moves randomly, and $50\%$ from a policy that primarily takes the path towards the ``unsafe" goal state in the lower right. 
We train three algorithms on this dataset, all of which use an RNN to process the observation histories: (1) filtered behavior cloning (BC) on the $25\%$ of trajectories in the data with highest reward, (2) conservative Q-learning (CQL)~\citep{kumar2020conservative}, which is a strong offline RL baseline, and (3) CQL augmented with our proposed bisimulation loss. 

In Figure~\ref{fig:gridworld_example}, we show the state-action visitations of policies learned under each algorithm. As expected, the policy learned by filtered BC primarily takes the path towards the unsafe goal state. However, an optimal policy should take the path rightwards that turns the goal into the ``safe" one. Both offline RL algorithms attempt to learn such a policy. However, the policy learned by na\"{i}ve CQL sometimes fails to realize that it must take the rightward path from the start state in order to do so, resulting in a high proportion of failed trajectories. This is likely due to the policy failing to infer the correct goal state due to improperly discarding relevant information in its observation history (as RNNs are known to ``forget" states that occur far in the past). As we hypothesized, adding a bisimulation loss remedied this issue, and the learned policy successfully takes the optimal path towards the ``safe" goal state.   

\vspace{-0.1in}
\subsection{Visual Navigation}
\vspace{-0.1in}
Next, we consider a much more complex task with image observations. We aim to show that our proposed approach improves offline RL performance even when the observation space is large.
\begin{wrapfigure}{r}{0.4\textwidth}
\vspace{-0.1in}
  \begin{minipage}{\linewidth}
  \begin{center}
    \includegraphics[width=\linewidth]{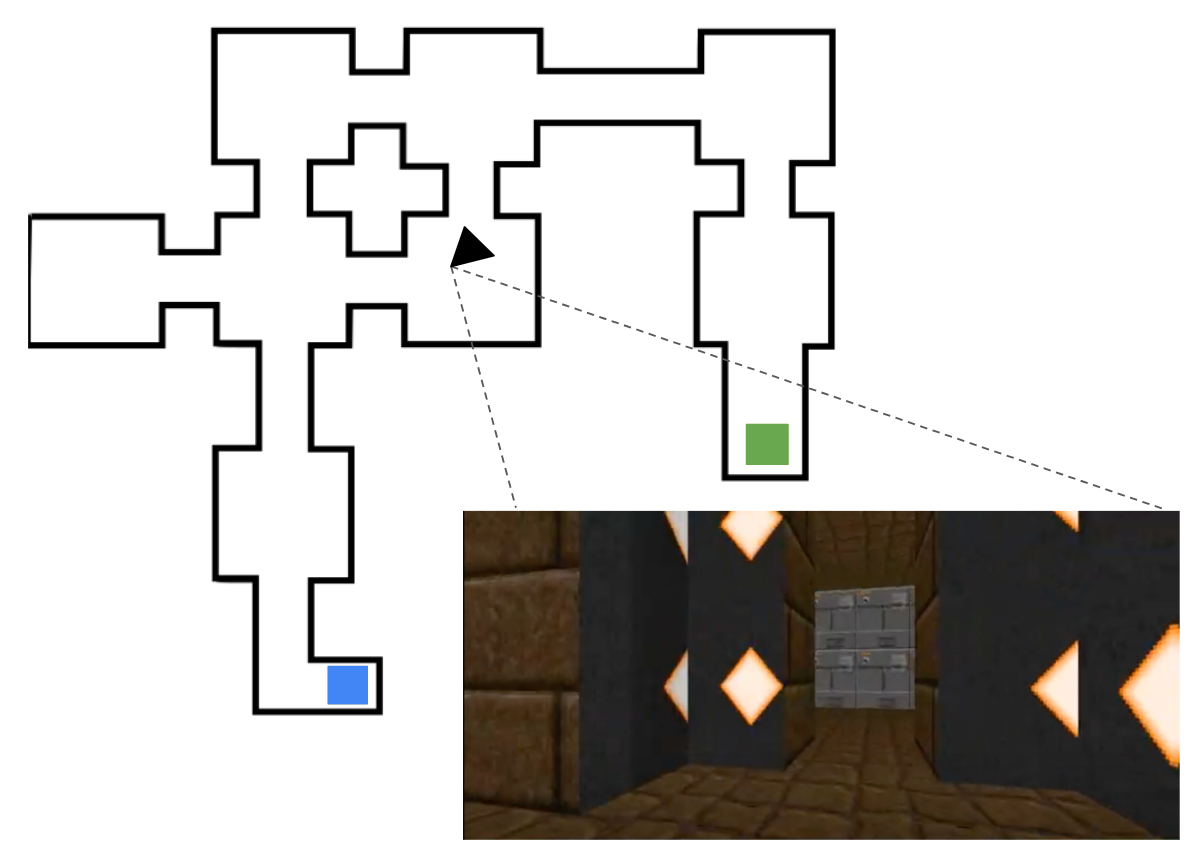}
  \end{center}
  \end{minipage}
  \caption{Layout of the VizDoom maze with example observation by agent.}
  \label{fig:vizdoom_layout}
  \vspace{-0.25in}
\end{wrapfigure}
The task we consider involves navigating a maze from first-person pixel observations, namely the ``My Way Home" scenario in the ViZDoom environment \citep{vizdoom}. 
In the task, the agent starts in a random room (among $8$ total rooms) at a random orientation, and is tasked to search for a piece of armor that is in a specific room.
At each step, the agent observes a $320\times240$ rendering of its first-person view of the maze, which we cropped and resized to be $80\times80$ in our experiments. 
The agent has three available actions: \{\texttt{turn left}, \texttt{turn right}, and \texttt{move forward}\}.
Figure~\ref{fig:vizdoom_layout} shows the layout and one possible observation by the agent.
The reward at each state is $-0.0001$ except at the location of the armor, where it is $+1$, and the agent has $H=2,100$ timesteps to find the armor. 
Because starting location of the agent is unknown, it must infer location from the history of visual observations. 

We construct a dataset of $|\mathcal{D}| = 5 \times 10^7$ frames, where $50\%$ of trajectories come from a policy that moves randomly, and $50\%$ from a policy trained via A2C \citep{mnih2016asynchronous} on roughly $5 \times 10^6$ frames. 
The policy performs better than random, but still only successfully solves the task $60\%$ of the time. 
However, we posit that both the random and A2C policies will occasionally behave optimally on different subsets of the maze, trajectory stiching will enable the learning of a policy that drastically improves upon both of them. 
We consider four algorithms, all of which use the same CNN and RNN to process the observation histories: (1) behavioral cloning (BC) on the full dataset, (2) filtered BC on the $40\%$ of trajectories in the data with highest reward, (3) conservative Q-learning (CQL) \citep{kumar2020conservative}, and (4) CQL augmented with our proposed bisimulation loss. 

\begin{table}
\centering
\small{\begin{tabular}{ c | c | c  }
\toprule
\textbf{Method} &\textbf{Mean Reward (Base Task)} &\textbf{Mean Reward (Hard Task)} \\ \midrule
 BC &  $0.05 \pm 0.02$ &  $0.01 \pm 0.01$  \\
 Filtered BC & $0.41 \pm 0.12$ &  $0.12 \pm 0.05$ \\
 CQL & $0.64 \pm 0.17$ &  $0.43 \pm 0.08$ \\
 CQL + Bisimulation & $\mathbf{0.71} \pm 0.14$ &  $\mathbf{0.58} \pm 0.09$\\
\bottomrule
 \end{tabular}}
\caption{Mean and standard deviation of scores achieved on ViZDoom navigation task.}
\label{tab:vizdoom}
\vspace{-0.25in}
\end{table}
In Table~\ref{tab:vizdoom}, we show the cumulative rewards achieved by each algorithm across $100$ independent evaluations. 
In the ``base" task, the agent spawns in a random location, and in the ``hard" task, the agent always spawns in the room farthest from the goal (blue in Figure~\ref{fig:vizdoom_layout}).
We see that offline RL greatly outperforms imitation learning in each environment, and that adding our bisimulation loss noticeably improves performance. 
We also see that the improvement is greater in the ``hard" task, likely because trajectories are longer and learning compact representations is more important.

\subsection{Natural Language Game} 

Our final task is a challenging benchmark to test the capabilities of offline RL on a natural language task. In particular, we aim to learn agents that successfully play the popular game Wordle. We adopt the details from this task from \citet{snell2023offline}, but provide a summary below. Although this is a relatively simple task, we use real transformer-based language models to address it, providing an initial evaluation of our hypothesis at a scale similar to modern deep networks.

In the game, the agent tries to guess a $5$-letter word randomly selected from a vocabulary.
Here, the state is the word and is completely unknown to the agent, and actions consists of a sequence of $5$ letter tokens.
After each action, the agent observes a sequence of $5$ color tokens, each with one of three ``colors” for
each letter in the guessed word: ``black” means the guessed letter is not in the underlying word,
``yellow” means the guessed letter is in the word but not in the right location, and ``green” means
the guessed letter is in the right location. 
We give a reward of -1 for each incorrect guess and a reward of 0 for a correct guess, at which point environment interaction ends.  
The agent gets a maximum of $H=6$ guesses to figure out the word.

\begin{wraptable}{r}{0.5\linewidth}
\centering
\small{\begin{tabular}{ c | c  }
\toprule
\textbf{Method} &\textbf{Wordle Score} \\ \midrule
 Fine-tuning &  $-2.83 \pm 0.05$\\
 Filtered Fine-tuning & $-3.02 \pm 0.06$ \\
 ILQL & $-2.21 \pm 0.03$ \\
 ILQL + Bisimulation & $\mathbf{-2.19} \pm 0.03$ \\
\bottomrule
 \end{tabular}}
\caption{Mean and standard deviation of scores achieved after training on human Wordle dataset.}
\label{tab:wordle}
\vspace{-0.1in}
\end{wraptable}
We use a dataset of Wordle games played by real humans and scraped from tweets, which was originally compiled and processed by \citet{snell2023offline}. 
We train four algorithms that use GPT-2 (with randomly initialized parameters) as a backbone transformer that encodes observation histories. The supervised methods predict actions via imitation learning as an additional head from the transformer: (1) fine-tuning (FT) uses the entire data, and (2) filtered FT uses top-$25\%$ of trajectories. The offline RL methods are: (3) Implicit Language Q-learning (ILQL) \citep{snell2023offline}, and (4) ILQL with bisimulation loss.  

\begin{wrapfigure}{l}{0.45\textwidth}
\vspace{-0.1in}
  \begin{minipage}{\linewidth}
  \begin{center}
    \includegraphics[width=\linewidth]{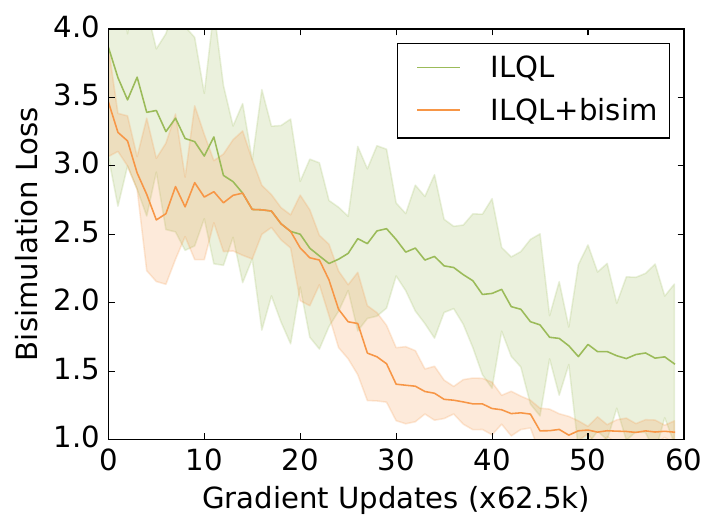}
  \end{center}
  \end{minipage}
  \caption{Bisimulation loss during training.}
  \label{fig:wordle_bisimulation_loss}
  \vspace{-0.1in}
\end{wrapfigure}
We report mean and standard deviation of scores of all method across $200$ independent evaluations in Table~\ref{tab:wordle}. We see that ILQL with bisimulation learning outperforms all other considered approaches, but only marginally over base ILQL.
We hypothesize that the reason why base ILQL already performs well on the Wordle task is because standard training is already learning useful representations that induce a bisimulation metric. 
We assess whether this is true by measuring our bisimulation loss for ILQL both with and without explicit minimization of the loss in Figure~\ref{fig:wordle_bisimulation_loss} across $5$ random runs of each algorithm. We notice that ILQL already implicitly minimizes the proposed loss during standard training. This is in line with our hypothesis, and perhaps somewhat surprising, as base ILQL has no awareness of this loss, and yet reduces it steadily during training.

\vspace{-0.1in}
\section{Discussion}
\vspace{-0.1in}

In this paper, we study the effectiveness of offline RL algorithms in POMDPs with unknown state spaces, where policies must utilize observation histories. We prove that because offline RL cannot in the worst case benefit from ``trajectory stitching" to learn efficiently in POMDPs, it suffers from poor worst-case sample complexity. However, we also identify that offline RL can actually be provably efficient with suitable representations. Such representations discard features irrelevant for action selection. We show a sufficient condition for this when the representations induce a bisimulation metric. In addition, we show how to improve existing offline RL algorithms by adding a bisimulation loss to enforce the learning of such representations.  While we show that learning representations that induce a bisimulation metric is sufficient to improve the effectiveness of offline RL with observation histories, it is by no means \emph{necessary}. A direction for future work is deriving a more nuanced characterization of when useful representations are learned just by standard offline RL training. By doing so, we could assess whether adding an auxiliary bisimulation loss is necessary.
In addition, our work shows that learning better representations of histories is key to making offline RL algorithms effective in POMDPs, and advocates for further research into developing algorithms that do so. 


\section*{Acknowledgements}
\vspace{-0.1in}
We thank the members of RAIL at UC Berkeley for their support and suggestions. We thank anonymous reviewers for feedback on an early version of this paper. This research was partly supported by the Office of Naval Research under N00014-21-1-2838, Intel, and AFOSR under FA9550-22-1-0273.

\bibliography{paper}
\bibliographystyle{plainnat}

\clearpage

\appendix

\section{Proofs}
\label{app:proofs}

In this section, we show the full proofs for the lemmas and theorems described in the main paper.

\subsection{Proof of Theorem~\ref{thm:naive_subopt}}
\label{app:proof_naive}
In this section, we proof the performance guarantee for \pevi\ given by the pseudocode in Algorithm~\ref{alg:pevi}. We follow the proof of Theorem 4.2 in \citet{kumar2022prefer}, but adapt it for our setting of observation-history-MDPs.

\begin{algorithm}[H]
\begin{small}
  \caption{\pevi}\label{alg:pevi}
  \begin{algorithmic}[1]
    \REQUIRE Offline dataset $\mathcal{D}$, confidence level $\delta$
    \STATE Compute $n(\tau, a)$ from $\mathcal{D}$, and estimate $\hat{r}(\tau, a)$, $\hat{P}(\cdot | \tau, a), \ \forall (\tau, a) \in \gH \times \gA$
    \STATE Initialize $\hat{Q}(\tau, a) \leftarrow 0, \hat{V}(\tau) \leftarrow 0, \ \forall (\tau, a)$
    \FOR{$h=1, 2, \hdots, H$}
        \STATE Construct $c(\tau, a), \ \forall (\tau, a) \in \gH \times \gA$ based on $\mathcal{D}$.
        \STATE Set $\hat{Q}(\tau, a) \leftarrow \hat{r}(\tau, a) - c(\tau, a) + \sum_{o'} \hat{P}(o' \mid \tau, a) \cdot \hat{V}(\tau \oplus o'), \ \forall (\tau, a) \in \gH \times \gA$.
        \STATE Set $\hat{\pi}(\cdot \mid \tau) \leftarrow \arg\max_\pi \hat{Q}(\tau, \cdot) \cdot \pi(\cdot \mid \tau) , \ \forall \tau \in \gH$.
        \STATE Set $\hat{V}(\tau) \leftarrow \hat{Q}(\tau, \cdot) \cdot \hat{\pi}(\cdot \mid \tau), \ \forall \tau \in \gH$.
    \ENDFOR
    \STATE Return $\hat{\pi}$
  \end{algorithmic}
\end{small}
\end{algorithm}

Recall from Section~\ref{sec:inefficiency}, that in a tabular POMDP, confidence intervals $c(\tau, a), \ \forall (\tau, a) \in \gH \times \gA$ can be constructed as in Equation~\ref{eqn:bonus_expression}. 

\subsubsection{Technical Lemmas}

\begin{lemma}[Bernstein's inequality]
Let $X, \{X_i\}_{i = 1}^n$ be i.i.d random variables with values in $[0, 1]$, and let $\delta > 0$. Then we have
\begin{align*}
    \prob{\abs{\E{}{X} - \frac{1}{n} \sum_{i=1}^n X_i} > \sqrt{\frac{2\Var(X)\log(2/\delta)}{n}} + \frac{\log(2/\delta)}{n}} \leq \delta\,.
\end{align*}
\label{lem:bernstein}
\end{lemma}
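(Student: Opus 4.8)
This is the standard two-sided Bernstein inequality for bounded i.i.d.\ random variables, so the plan is to follow the classical Chernoff (exponential-moment) argument and then invert the resulting tail bound. First I would center the variables: set $Y_i = X_i - \E{}{X}$, so that the $Y_i$ are i.i.d., mean zero, bounded by $|Y_i| \le 1$, and satisfy $\E{}{Y_i^2} = \Var(X) =: \sigma^2$. Writing $S_n = \sum_{i=1}^n Y_i$, the goal becomes controlling $\prob{S_n/n > t}$ and $\prob{-S_n/n > t}$ separately and combining them by a union bound; this is precisely what produces the factor $2$ inside $\log(2/\delta)$.

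The core estimate is a bound on the moment generating function of a single centered variable. Expanding the exponential and using that $|Y_i|\le 1$ forces $\E{}{Y_i^k} \le \E{}{|Y_i|^k} \le \E{}{Y_i^2} = \sigma^2$ for every $k \ge 2$, I get, for $\lambda > 0$, that $\E{}{e^{\lambda Y_i}} = 1 + \sum_{k \ge 2} \frac{\lambda^k \E{}{Y_i^k}}{k!} \le 1 + \sigma^2 \sum_{k \ge 2} \frac{\lambda^k}{k!} = 1 + \sigma^2(e^\lambda - 1 - \lambda) \le \exp(\sigma^2(e^\lambda - 1 - \lambda))$. By independence this tensorizes to $\E{}{e^{\lambda S_n}} \le \exp(n\sigma^2(e^\lambda - 1 - \lambda))$, and Markov's inequality applied to $e^{\lambda S_n}$ then gives, for every $\lambda > 0$, the bound $\prob{S_n > u} \le \exp(-\lambda u + n\sigma^2(e^\lambda - 1 - \lambda))$.

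It remains to optimize and invert. Choosing $\lambda$ to (near-)minimize the exponent yields the familiar Bernstein tail $\prob{S_n > u} \le \exp(-\frac{u^2}{2(n\sigma^2 + u/3)})$. Setting this equal to $\delta/2$ and solving the resulting quadratic for $u$, then applying sub-additivity of the square root ($\sqrt{a+b} \le \sqrt{a} + \sqrt{b}$) to split the variance term from the boundedness term, shows that $u = \sqrt{2n\sigma^2\log(2/\delta)} + \frac{2}{3}\log(2/\delta)$ already makes the tail at most $\delta/2$. Dividing by $n$ and noting that the stated threshold uses the larger coefficient $1 \ge \frac{2}{3}$ on the linear term, the one-sided bound $\prob{S_n/n > \sqrt{2\sigma^2\log(2/\delta)/n} + \log(2/\delta)/n} \le \delta/2$ follows a fortiori. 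By symmetry the same threshold controls the lower tail $-S_n/n$, and a union bound over the two one-sided events gives the claimed two-sided inequality with total failure probability at most $\delta$.

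Since this is a textbook result, there is no genuine obstacle; the only steps requiring care are the moment-generating-function bound, where the mean-zero and bounded-variance structure must be used to retain the $\sigma^2 = \Var(X)$ factor (a crude Hoeffding-style estimate would discard it and yield a worse, variance-free rate), and the final inversion, where matching the stated form cleanly relies on the square-root splitting rather than a direct quadratic-formula estimate.
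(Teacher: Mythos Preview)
Your argument is correct and is exactly the standard Chernoff--Bennett derivation of Bernstein's inequality; the moment-generating-function bound, the optimization to the $\exp\!\bigl(-u^2/[2(n\sigma^2+u/3)]\bigr)$ form, the quadratic inversion with $\sqrt{a+b}\le\sqrt a+\sqrt b$, and the final relaxation $2/3\le 1$ all check out. The paper itself does not prove this lemma---it is listed among the ``Technical Lemmas'' and invoked as a known result---so there is no alternative approach in the paper to compare yours against.
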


\begin{lemma}[Theorem 4, \citet{maurer2009bernstein}]
Let $X, \{X_i\}_{i = 1}^n$ with $n \geq 2$ be i.i.d random variables with values in $[0, 1]$. Define $\bar{X} = \frac{1}{n}\sum_{i = 1}^n X_i$ and $\widehat{\mathrm{\Var}}(X) = \frac{1}{n} \sum_{i=1}^n (X_i - \bar{X})^2$. Let $\delta > 0$. Then we have
\begin{align*}
    \prob{\abs{\E{}{X} - \frac{1}{n} \sum_{i=1}^n X_i} > \sqrt{\frac{2\widehat{\Var}(\bar{X})\log(2/\delta)}{n - 1}} + \frac{7\log(2/\delta)}{3(n - 1)}} \leq \delta\,.
\end{align*}
\label{lem:empirical_bernstein}
\end{lemma}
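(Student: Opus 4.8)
The statement is the \emph{empirical Bernstein inequality}, so the plan is to upgrade the ordinary Bernstein bound of Lemma~\ref{lem:bernstein} --- which controls $\abs{\E{}{X} - \bar X}$ in terms of the \emph{unknown} true variance $\Var(X)$ --- into a bound expressed through the \emph{observable} empirical variance $\widehat{\Var}(X)$. Concretely, I would first invoke Lemma~\ref{lem:bernstein} to obtain, with probability at least $1-\delta/2$, a deviation of order $\sqrt{\Var(X)\log(1/\delta)/n} + \log(1/\delta)/n$, and then separately establish a high-probability comparison of the form $\sqrt{\Var(X)} \le \sqrt{\widehat{\Var}(X)} + O(\sqrt{\log(1/\delta)/n})$. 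Substituting the latter into the former and collecting terms yields a bound governed by $\widehat{\Var}(X)$, which after bookkeeping takes the stated shape with leading term $\sqrt{2\widehat{\Var}(X)\log(2/\delta)/(n-1)}$ and lower-order term $7\log(2/\delta)/(3(n-1))$.

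The crux is the comparison between the true and empirical standard deviations, and this is where essentially all the work lies. I would treat the sample standard deviation $S_n := \sqrt{\widehat{\Var}(X)}$ as a function of the i.i.d.\ sample $(X_1,\dots,X_n)$ and exploit that, because each $X_i \in [0,1]$, replacing a single coordinate perturbs $S_n$ by at most $O(1/\sqrt n)$. This bounded-differences property feeds two standard tools: McDiarmid's inequality gives sub-Gaussian concentration of $S_n$ about its mean $\E{}{S_n}$ at scale $1/\sqrt n$, and the Efron--Stein inequality bounds $\Var(S_n) = O(1/n)$. The remaining gap $\abs{\sqrt{\Var(X)} - \E{}{S_n}}$ is then controlled through the identities $\E{}{S_n^2} = \tfrac{n-1}{n}\Var(X)$ and $\Var(S_n) = \E{}{S_n^2} - (\E{}{S_n})^2$, which together pin $\E{}{S_n}$ to within $O(1/\sqrt n)$ of $\sqrt{\Var(X)}$.

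The main obstacle is not the rate --- the bounded-differences argument above already delivers the correct $1/\sqrt n$ scaling --- but the \emph{sharp constants}: the precise $2$ inside the square root, the $7/3$ factor, and the $n-1$ denominators are characteristic of the self-bounding / entropy-method analysis (the empirical variance is a self-bounding function, to which a Bennett-type concentration inequality applies), rather than of a plain McDiarmid bound, which would only yield the right form with looser constants. I would therefore either route the final step through the entropy-method concentration for self-bounding functions to recover the exact constants, or simply cite Theorem~4 of \citet{maurer2009bernstein}, whose derivation carries out precisely this self-bounding computation; the conceptual content --- that the empirical standard deviation concentrates around the true one fast enough to be freely substituted into Bernstein's bound --- is fully captured by the bounded-differences picture.
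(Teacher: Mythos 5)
The paper does not prove this statement at all: it is imported verbatim as Theorem~4 of \citet{maurer2009bernstein}, so there is no internal proof to compare against. Your outline is a faithful sketch of how that reference actually establishes the result --- Bernstein/Bennett for the mean combined with a high-probability comparison $\sqrt{\Var(X)} \le \sqrt{\widehat{\Var}(X)} + O(\sqrt{\log(1/\delta)/n})$, the latter obtained from concentration of the empirical variance as a self-bounding function --- and you correctly flag that a plain McDiarmid/Efron--Stein route gives the right $1/\sqrt{n}$ rate but not the exact constants ($2$ inside the root, $7/3$, and the $n-1$ denominators), which require the entropy-method bound for self-bounding functions. Since the paper itself simply cites the theorem, your concluding option of doing the same is exactly what the authors do; no gap to report.
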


\begin{lemma}[Lemma 4, \citet{ren2021nearly}]
Let $\lambda_1, \lambda_2 > 0$ be constants. Let $f: \mathbb{Z}_{\geq 0} \to \mathbb{R}$ be a function such that $f(i) \leq H, \ \forall i$ and $f(i)$ satisfies the recursion
\begin{align*}
    f(i) \leq \sqrt{\lambda_1 f(i + 1)} + \lambda_1 + 2^{i + 1}\lambda_2\,.
\end{align*}
Then, we have that $f(0) \leq 6(\lambda_1 + \lambda_2)$. 
\label{lem:recursion_bound}
\end{lemma}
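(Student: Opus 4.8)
The plan is to linearize the square-root term so that the given nonlinear recursion becomes an ordinary geometric contraction, and then to invoke the uniform bound $f(i)\le H$ only at the very end, solely to discard the tail of the unrolling.

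First I would apply the weighted AM-GM inequality $\sqrt{ab}\le \tfrac{\eta}{2}a+\tfrac{1}{2\eta}b$ with $a=f(i+1)$, $b=\lambda_1$, and $\eta=\tfrac12$, which gives
\[
\sqrt{\lambda_1 f(i+1)}\le \tfrac14 f(i+1)+\lambda_1 .
\]
Substituting this into the hypothesis turns the recursion into the linear one
\[
f(i)\le \tfrac14 f(i+1)+2\lambda_1+2^{i+1}\lambda_2 .
\]

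Next I would unroll this linear recursion $K$ times, obtaining
\[
f(0)\le \sum_{k=0}^{K-1}\left(\tfrac14\right)^k\left(2\lambda_1+2^{k+1}\lambda_2\right)+\left(\tfrac14\right)^K f(K).
\]
This is where the uniform bound enters: since $f(K)\le H$, the residual $(1/4)^K f(K)\le (1/4)^K H\to 0$ as $K\to\infty$, so it vanishes in the limit. Summing the two resulting geometric series — $\sum_k (1/4)^k=4/3$ for the $\lambda_1$ part and $\sum_k (1/4)^k 2^k=\sum_k (1/2)^k=2$ for the $\lambda_2$ part — yields
\[
f(0)\le 2\lambda_1\cdot\tfrac43+2\lambda_2\cdot 2=\tfrac83\lambda_1+4\lambda_2\le 4(\lambda_1+\lambda_2)\le 6(\lambda_1+\lambda_2),
\]
which is the claimed bound (in fact slightly stronger).

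The one point that genuinely matters is the interplay between the contraction factor and the \emph{growing} additive term $2^{i+1}\lambda_2$. Because the linearization produces a contraction coefficient $1/4$ while the additive term grows by a factor $2$ per level, the effective ratio in the $\lambda_2$-series is $2\cdot\tfrac14=\tfrac12<1$, so the series converges. Had the AM-GM step been performed with a worse weight (contraction coefficient $\ge\tfrac12$), this series would diverge and the argument would break. Thus the main obstacle reduces to choosing $\eta$ small enough that $\tfrac{\eta}{2}\cdot 2<1$ while keeping the constant term $\tfrac{\lambda_1}{2\eta}+\lambda_1$ controlled; the choice $\eta=\tfrac12$ achieves both comfortably, and everything else is routine geometric-series bookkeeping together with the tail-vanishing argument that requires only the boundedness $f(i)\le H$.
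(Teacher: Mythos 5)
Your proof is correct, and it is worth noting that the paper itself gives no proof of this lemma at all --- it is imported verbatim as Lemma 4 of \citet{ren2021nearly}, so the only comparison available is with that external argument. The standard proof there unrolls the nonlinear recursion directly, producing nested square roots of the form $\lambda_1^{1-2^{-j}} f(j)^{2^{-j}}$ and invoking $f(j)\le H$ at a carefully chosen depth $j$ to make the residual factor $H^{2^{-j}}$ an $O(1)$ constant; the constant $6$ is an artifact of that bookkeeping. Your route --- linearizing $\sqrt{\lambda_1 f(i+1)}\le \tfrac14 f(i+1)+\lambda_1$ via weighted AM--GM and then summing an ordinary geometric series --- is more elementary, uses $f(i)\le H$ only to kill the tail $(1/4)^K f(K)$, and in fact yields the sharper constant $4(\lambda_1+\lambda_2)$. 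Your observation about why the weight must be chosen so the contraction factor beats the $2^{i+1}$ growth (i.e.\ $\eta/2\cdot 2<1$) is exactly the right thing to flag; the only implicit hypothesis you are leaning on is $f(i+1)\ge 0$, which is forced anyway for $\sqrt{\lambda_1 f(i+1)}$ to be defined. The argument is complete.
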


\subsubsection{Pessimism Guarantee}
The first thing we want to show is that with high probability, the algorithm provides pessimistic value estimates, namely that $\hat{V}_h(\tau) \leq V^*(\tau)$ for all $h \in [H]$ and $\tau \in \gH$. To do so, we introduce a notion of a ``good" event, which occurs when our empirical estimates of the MDP are not far from the true MDP. We define $\mathcal{E}_1$ to be the event where
\begin{align}
    \abs{(\hat{P}(\cdot \mid \tau, a) - P(\cdot \mid \tau, a)) \cdot \hat{V}_h(\tau \oplus \cdot)}
    \leq 
    \sqrt{\frac{H\mathbb{V}(\hat{P}(\cdot \mid \tau, a), \hat{V}_h(\tau \oplus \cdot)) \iota}{(n(\tau, a) \wedge 1)}}
    +
    \frac{\iota}{(n(\tau, a) \wedge 1)}
\label{eq:event_good_transition}
\end{align}
holds for all $i \in [m]$ and $(\tau, a) \in \gH \times \gA$. With abuse of notation, we let $\hat{V}_h(\tau \oplus \cdot)$ be a vector of values of histories of the form $\tau \oplus o'$ for $o' \in \gO$. We also define $\mathcal{E}_2$ to be the event where
\begin{align}
    \abs{\hat{r}(\tau, a) - r(\tau, a)}
    \leq 
    \sqrt{\frac{\hat{r}(\tau, a)\iota}{(n(\tau, a) \wedge 1)}} + \frac{\iota}{(n(\tau, a) \wedge 1)}
\label{eq:event_good_reward}
\end{align}
holds for all $(\tau, a)$. 

We want to show that the good event $\mathcal{E} = \mathcal{E}_1 \cap \mathcal{E}_2$ occurs with high probability. The proof mostly follows from Bernstein's inequality in Lemma~\ref{lem:bernstein} . Note that because $\hat{P}(\cdot \mid \tau, a), \hat{V}_i$ are not independent, we cannot straightforwardly apply Bernstein's inequality. We instead use the approach of \citet{agarwal2020model} who, for each state $s$, partition the range of $\hat{V}_i(\tau)$ within a modified $s$-absorbing MDP to create independence from $\hat{P}$. The following lemma from \citet{agarwal2020model} is a result of such analysis:
\begin{lemma}[Lemma 9, \citet{agarwal2020model}]
For any $h \in [H]$, $(\tau, a) \in \gH \times \gA$ such that $n(\tau, a) \geq 1$, and $\delta > 0$, we have
\begin{align*}
    \prob{\abs{(\hat{P}(\cdot \mid \tau, a) - P(\cdot \mid \tau, a)) \cdot \hat{V}_h(\tau \oplus \cdot)} > 
    \sqrt{\frac{H\mathbb{V}(\hat{P}(\cdot \mid \tau, a), \hat{V}_h(\tau \oplus \cdot)) \iota}{n(\tau, a)}}
    + \frac{\iota}{n(\tau, a)}}
    \leq \delta\,.
\end{align*}
\label{lem:empirical_transition_concentration}
\end{lemma}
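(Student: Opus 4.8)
The plan is to reproduce the $\tau$-absorbing decoupling argument of \citet{agarwal2020model}, since the observation-history-MDP is itself a tabular MDP with ``states'' $\tau \in \gH$ and the lemma is a generic statement about such MDPs. The whole difficulty is that $\hat{V}_h(\tau \oplus \cdot)$ and the empirical transition $\hat{P}(\cdot \mid \tau, a)$ are built from overlapping data — the transitions counted in $n(\tau, a)$ are the same trajectory steps whose continuations determine the downstream values — so the summands of $(\hat{P} - P)(\cdot \mid \tau, a) \cdot \hat{V}_h(\tau \oplus \cdot)$ are not independent, and the bare Bernstein bound of Lemma~\ref{lem:bernstein} does not apply.

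To remove this coupling I would fix $(\tau, a)$ and, for each scalar $u \in [0, H]$, introduce an auxiliary \emph{$\tau$-absorbing} empirical MDP: identical to the estimated MDP everywhere except that $\tau$ is turned into an absorbing history whose value is fixed to $u$. Let $\hat{V}^{(\tau, u)}$ be the corresponding value function. Because the outgoing edge at $\tau$ has been severed, the backup defining $\hat{V}^{(\tau, u)}$ never invokes $\hat{P}(\cdot \mid \tau, a)$; hence the vector $\hat{V}^{(\tau, u)}(\tau \oplus \cdot)$ is independent of the empirical next-observation counts at $(\tau, a)$. I would then verify the fixed-point identity $\hat{V}_h = \hat{V}^{(\tau, u)}$ at the self-consistent choice $u = \hat{V}_h(\tau)$, which reconnects the auxiliary object to the quantity we actually care about.

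With independence in hand for each fixed $u$, the samples $\hat{V}^{(\tau, u)}(\tau \oplus o'_i)$ with $o'_i \sim P(\cdot \mid \tau, a)$ are i.i.d.\ and bounded, so after rescaling to $[0,1]$ the empirical-Bernstein bound of Lemma~\ref{lem:empirical_bernstein} yields a deviation of order $\sqrt{\mathbb{V}(\hat{P}(\cdot \mid \tau, a), \hat{V}^{(\tau, u)}(\tau \oplus \cdot))\,\iota / n(\tau, a)} + \iota / n(\tau, a)$, with the empirical variance appearing exactly as the $\mathbb{V}$ term. I would union-bound this over a finite $\epsilon$-net of absorbing values $u$ covering $[0, H]$; the net cardinality, the failure probability $\delta$, and the union over $(\tau, a)$ all contribute only polylogarithmic factors that I fold into $\iota$.

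Finally, I would instantiate at the net point nearest $\hat{V}_h(\tau)$ and use a short Lipschitz estimate to control how the deviation term and the variance $\mathbb{V}$ move when $u$ is perturbed by at most $\epsilon$; taking $\epsilon$ polynomially small (say $1/N$) makes the discretization error lower-order and absorbs it into the two stated terms, where the leading $\sqrt{H}$ factor tracks the $[0,H]$ range of the value function and the net resolution. The one genuinely delicate step is the absorbing construction — establishing both that $\hat{V}^{(\tau,u)}(\tau \oplus \cdot)$ is independent of $\hat{P}(\cdot \mid \tau, a)$ and that the fixed-point identity holds — after which the concentration and net-union steps are routine.
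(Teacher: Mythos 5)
Your proposal is correct and matches the paper's treatment: the paper does not prove this lemma itself but imports it verbatim from \citet{agarwal2020model}, describing exactly the technique you reconstruct --- an $s$-absorbing (here $\tau$-absorbing) auxiliary MDP with the value at $\tau$ pinned to $u$ to decouple $\hat{V}_h(\tau\oplus\cdot)$ from $\hat{P}(\cdot\mid\tau,a)$, followed by empirical Bernstein, a union bound over a net of values $u$ partitioning $[0,H]$, and a Lipschitz argument at the self-consistent $u=\hat{V}_h(\tau)$. Your reconstruction of the absorbing construction, the fixed-point identity, and the bookkeeping of the net into $\iota$ is faithful to that cited argument.
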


Using this, we can show that $\mathcal{E}$ occurs with high probability: 
\begin{lemma}
$\prob{\mathcal{E}} \geq 1 - 2|\gH||\gA|H\delta.$
\end{lemma}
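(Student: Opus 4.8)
The plan is to establish the high-probability bound $\prob{\mathcal{E}} \geq 1 - 2|\gH||\gA|H\delta$ by a union bound over the two ``good'' events $\mathcal{E}_1$ and $\mathcal{E}_2$, each decomposed over all triples $(h, \tau, a) \in [H] \times \gH \times \gA$. The key observation is that both $\mathcal{E}_1$ and $\mathcal{E}_2$ are defined as intersections of per-$(h,\tau,a)$ concentration events, so it suffices to bound the failure probability of each individual event and then sum.

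First I would handle $\mathcal{E}_1$. The per-coordinate event here is exactly the statement of Lemma~\ref{lem:empirical_transition_concentration} (Lemma 9 of \citet{agarwal2020model}), which guarantees that for each fixed $(h, \tau, a)$ with $n(\tau, a) \geq 1$, the transition-error bound in Equation~\ref{eq:event_good_transition} fails with probability at most $\delta$. The crucial subtlety is that $\hat{P}(\cdot \mid \tau, a)$ and $\hat{V}_h$ are statistically dependent, so one cannot naively apply Bernstein's inequality; the absorbing-MDP argument of \citet{agarwal2020model} is what restores the needed independence, and invoking their lemma as a black box sidesteps this. Applying the union bound across all $|\gH||\gA|$ pairs $(\tau, a)$ and all $H$ values of $h$, the event $\mathcal{E}_1$ fails with probability at most $|\gH||\gA|H\delta$. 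For the cases $n(\tau,a) = 0$, the denominator $n(\tau,a) \wedge 1 = 1$ and the bound holds trivially (or vacuously, since the relevant estimate is defined to be zero), so these contribute nothing.

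Next I would handle $\mathcal{E}_2$ analogously. The reward estimate $\hat{r}(\tau, a)$ is an empirical mean of i.i.d.\ bounded rewards in $[0,1]$ that is independent of nothing problematic, so a direct application of Bernstein's inequality (Lemma~\ref{lem:bernstein}), together with the standard trick of replacing the true variance $\Var(r)$ by $\hat{r}(\tau,a)$ (valid since for a $[0,1]$-valued random variable $\Var(X) \leq \E{}{X}$, up to the polylogarithmic slack absorbed into $\iota$), yields that Equation~\ref{eq:event_good_reward} fails for a fixed $(\tau, a)$ with probability at most $\delta$. Since the reward bound does not depend on $h$, a union bound over the $|\gH||\gA|$ pairs alone would give failure probability $|\gH||\gA|\delta$; to match the stated $H$-factor bookkeeping I would simply bound this more loosely by $|\gH||\gA|H\delta$. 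Combining the two failure probabilities gives $\prob{\mathcal{E}^c} = \prob{\mathcal{E}_1^c \cup \mathcal{E}_2^c} \leq 2|\gH||\gA|H\delta$, which is the claim.

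The main obstacle is the dependence issue in $\mathcal{E}_1$: the appearance of $\hat{V}_h$, which is itself a function of the empirical transitions $\hat{P}$, inside the concentration statement. This is precisely why the proof must route through the \citet{agarwal2020model} machinery rather than a one-line Bernstein application, and it is the only step requiring genuine care; everything else is routine union-bounding and the elementary variance bound for bounded random variables. I would therefore spend the bulk of the write-up making explicit that Lemma~\ref{lem:empirical_transition_concentration} already absorbs this difficulty and that the remaining work is purely combinatorial counting of the $H|\gH||\gA|$ events.
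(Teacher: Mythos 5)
Your overall strategy is the same as the paper's: handle the degenerate counts trivially, invoke Lemma~\ref{lem:empirical_transition_concentration} (the absorbing-MDP argument of \citet{agarwal2020model}) as a black box for the transition event to sidestep the dependence between $\hat{P}$ and $\hat{V}_h$, apply a Bernstein-type bound to the reward event, and union-bound over the $H|\gH||\gA|$ indices. That matches the paper's proof almost step for step.

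There is, however, one step in your reward argument that does not go through as written. You apply the population Bernstein inequality (Lemma~\ref{lem:bernstein}) and then replace $\Var(X)$ by $\hat{r}(\tau,a)$ via the inequality $\Var(X)\leq\E{}{X}$. But $\E{}{X}=r(\tau,a)$ is the \emph{true} mean, whereas the event $\mathcal{E}_2$ in Equation~\ref{eq:event_good_reward} is stated with the \emph{empirical} mean $\hat{r}(\tau,a)$ under the square root. The gap between $r$ and $\hat{r}$ is not a polylogarithmic factor absorbable into $\iota$ --- it is exactly the quantity the event is trying to control, so the substitution is circular. The paper avoids this by instead invoking the empirical Bernstein inequality of \citet{maurer2009bernstein} (Lemma~\ref{lem:empirical_bernstein}), in which the \emph{empirical} variance $\widehat{\Var}$ appears, and for $[0,1]$-valued samples one has $\widehat{\Var}(\hat{r}(\tau,a))\leq\hat{r}(\tau,a)$ directly (this is also why the paper splits cases at $n(\tau,a)\geq 2$ rather than $n(\tau,a)\geq 1$: Lemma~\ref{lem:empirical_bernstein} requires $n\geq 2$). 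Your argument can be repaired either by switching to Lemma~\ref{lem:empirical_bernstein} as the paper does, or by adding a self-bounding step showing $r(\tau,a)\lesssim\hat{r}(\tau,a)+\iota/n(\tau,a)$ on the good event; as written, the justification is incomplete. Everything else --- the counting, the treatment of unvisited pairs, and the reliance on the \citet{agarwal2020model} lemma for the dependence issue --- is correct and matches the paper.
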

\begin{proof}
For each $i$ and $(\tau, a)$, if $n(\tau, a) \leq 1$, then \eqref{eq:event_good_transition} and \eqref{eq:event_good_reward} hold trivially. For $n(\tau, a) \geq 2$, we have from Lemma~\ref{lem:empirical_transition_concentration} that
\begin{align*}
    &\prob{\abs{(\hat{P}(\cdot \mid \tau, a) - P(\cdot \mid \tau, a)) \cdot \hat{V}_h(\tau \oplus \cdot)} > 
    \sqrt{\frac{H\mathbb{V}(\hat{P}(\cdot \mid \tau, a), \hat{V}_h(\tau \oplus \cdot)) \iota}{n(\tau, a)}}
    + \frac{\iota}{n(\tau, a)}}
    \leq \delta\,.
\end{align*}
Similarly, we can use Lemma~\ref{lem:empirical_bernstein} to derive
\begin{align*}
    &\prob{\abs{\hat{r}(\tau, a) - r(\tau, a)} > 
    \sqrt{\frac{\hat{r}(\tau, a)\iota}{n(\tau, a)}} + \frac{\iota}{n(\tau, a)}} \\
    &\leq
    \prob{\abs{\hat{r}(\tau, a) - r(\tau, a)} > 
    \sqrt{\frac{\widehat{\mathrm{Var}}(\hat{r}(\tau, a))\iota}{2(n(\tau, a) - 1)}} + \frac{\iota}{2(n(\tau, a) - 1)}}
    \leq \delta\,,
\end{align*}
where we use that $\widehat{\mathrm{Var}}(\hat{r}(\tau, a)) \leq \hat{r}(\tau, a)$ for $[0, 1]$ rewards, and with slight abuse of notation, let $\iota$ capture all constant factors. Taking the union bound over all $i$ and $(\tau, a)$ yields the desired result. 
\end{proof}

Now, we can prove that our value estimates are indeed pessimistic.

\begin{lemma} [Pessimism Guarantee]
On event $\mathcal{E}$, we have that $\hat{V}_h(\tau) \leq V^{\hat{\pi}}(\tau) \leq V^*(\tau)$ for any step $h \in [H]$ and state $\tau \in \gH$.
\label{lem:pessimism_guarantee}
\end{lemma}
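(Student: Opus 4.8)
The plan is to prove the two inequalities separately. The right-hand inequality $V_h^{\hat\pi}(\tau) \le V^*(\tau)$ is immediate, since $V^*$ is by definition the value of the optimal policy and therefore dominates the value of any policy, in particular $\hat\pi$. All the work lies in the left-hand inequality $\hat V_h(\tau) \le V_h^{\hat\pi}(\tau)$, which I would establish by backward induction on $h$ from $H+1$ down to $1$, conditioned throughout on the good event $\mathcal{E}$. The base case $\hat V_{H+1} = V_{H+1}^{\hat\pi} = 0$ holds by initialization.

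For the inductive step, assume $\hat V_{h+1}(\tau') \le V_{h+1}^{\hat\pi}(\tau')$ for every $\tau'$. Subtracting the two Bellman updates gives
\begin{align*}
\hat Q_h(\tau,a) - Q_h^{\hat\pi}(\tau,a) &= \big(\hat r(\tau,a) - r(\tau,a)\big) - c(\tau,a) \\ &\quad + \sum_{o'}\hat P(o'\mid\tau,a)\,\hat V_{h+1}(\tau\oplus o') - \sum_{o'}P(o'\mid\tau,a)\,V_{h+1}^{\hat\pi}(\tau\oplus o').
\end{align*}
The crucial step is to split the transition term as
\begin{align*}
(\hat P - P)(\cdot\mid\tau,a)\cdot \hat V_{h+1}(\tau\oplus\cdot) \;+\; \sum_{o'}P(o'\mid\tau,a)\big(\hat V_{h+1}(\tau\oplus o') - V_{h+1}^{\hat\pi}(\tau\oplus o')\big).
\end{align*}
This particular grouping is chosen so that the second summand is a nonnegative average of the induction-hypothesis differences $\hat V_{h+1} - V_{h+1}^{\hat\pi} \le 0$ against the \emph{true} kernel $P$, and hence is $\le 0$, while the first summand is exactly the quantity controlled by $\mathcal{E}_1$ in terms of $\hat V_{h+1}$ (not $V_{h+1}^{\hat\pi}$), matching the form of the concentration bound.

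It then remains to verify that the bonus $c(\tau,a)$ in Equation~\ref{eqn:bonus_expression} dominates the residual estimation error. On $\mathcal{E}$ we have $|\hat r - r| \le \sqrt{\hat r(\tau,a)\iota/(n(\tau,a)\wedge 1)} + \iota/(n(\tau,a)\wedge 1)$ and $|(\hat P - P)\cdot \hat V_{h+1}| \le \sqrt{H\,\mathbb{V}(\hat P(\cdot\mid\tau,a),\hat V_{h+1}(\tau\oplus\cdot))\,\iota/(n(\tau,a)\wedge 1)} + \iota/(n(\tau,a)\wedge 1)$. Comparing term by term against the three summands of $c(\tau,a)$, using $H \ge 1$ to absorb the reward term and $H \ge 2$ to let the single $H\iota/(n\wedge1)$ cover both additive $\iota/(n\wedge1)$ contributions, yields $c(\tau,a) \ge |\hat r - r| + |(\hat P - P)\cdot\hat V_{h+1}|$. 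Hence $\hat Q_h(\tau,a) - Q_h^{\hat\pi}(\tau,a) \le 0$ for every $a$. Finally, since $\hat\pi$ is greedy with respect to $\hat Q_h$, I would conclude $\hat V_h(\tau) = \sum_a \hat Q_h(\tau,a)\hat\pi(a\mid\tau) \le \sum_a Q_h^{\hat\pi}(\tau,a)\hat\pi(a\mid\tau) = V_h^{\hat\pi}(\tau)$, closing the induction.

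The main obstacle, and the only place requiring care rather than routine computation, is the transition-term decomposition: one must route the $(\hat P - P)$ discrepancy onto $\hat V_{h+1}$ so that $\mathcal{E}_1$ (stated for $\hat V$) applies directly, while simultaneously routing the value gap $\hat V_{h+1} - V_{h+1}^{\hat\pi}$ through the true kernel $P$ so that the induction hypothesis together with nonnegativity of $P$ makes it vanish. The wrong split would force bounding $(\hat P - P)\cdot V_{h+1}^{\hat\pi}$, which is not what the concentration event provides. The subsequent verification that $c$ upper-bounds the errors is mechanical once the constants in Equation~\ref{eqn:bonus_expression} are tracked.
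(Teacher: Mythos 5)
Your proof is correct and follows essentially the same route as the paper's: backward induction with the key decomposition $(\hat P - P)\cdot \hat V + P\cdot(\hat V - V^{\hat\pi})$, routing the concentration event onto $\hat V$ and the inductive gap through the true kernel, then checking that $c(\tau,a)$ dominates the reward and transition estimation errors. The only differences are cosmetic (indexing direction of the induction and making the constant-absorption explicit), so no further comment is needed.
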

\begin{proof}
We aim to prove the following for any $h$ and $\tau$: $\hat{V}_{h -1}(\tau) \leq \hat{V}_h(\tau) \leq V^{\hat{\pi}}(\tau) \leq V^*(\tau)$. We prove the claims one by one.

$\hat{V}_{h-1}(\tau) \leq \hat{V}_h(\tau)$: This is directly implied by the monotonic update of our algorithm. 

$\hat{V}_h(\tau) \leq V^{\hat{\pi}}(\tau)$: We will prove this via induction. We have that this holds for $\hat{V}_0$ trivially. Assume it holds for $h - 1$, then we have
\begin{align*}
    V^{\hat{\pi}}(\tau) 
    &\geq 
    \E{a \sim \hat{\pi}(\cdot|\tau)}{r(\tau, a) + P(\cdot \mid \tau, a) \cdot \hat{V}_{h-1}(\tau \oplus \cdot)} \\
    &\geq
    \E{a}{\hat{r}(\tau, a) - c_h(\tau, a) + \hat{P}(\cdot \mid \tau, a) \cdot \hat{V}_{h-1}(\tau \oplus \cdot)} + \\
    &\qquad \E{a}{c_h(\tau, a) - (\hat{r}(s ,a) - r(\tau, a)) - (\hat{P}(\cdot \mid \tau, a) - P(\cdot \mid \tau, a)) \cdot \hat{V}_{h -1}(\tau \oplus \cdot)} \\
    &\geq \hat{V}_{h}(\tau)\,,
\end{align*}
where we use that
\begin{align*}
    c_h(\tau, a) 
    &\geq
    (\hat{r}(s ,a) - r(\tau, a)) + (\hat{P}(\cdot \mid \tau, a) - P(\cdot \mid \tau, a)) \cdot \hat{V}_{h -1}(\tau \oplus \cdot)
\end{align*}
under event $\mathcal{E}$. 

Finally, the claim of $V^{\hat{\pi}}(\tau) \leq V^*(\tau)$ is trivial, which completes the proof of our pessimism guarantee. 
\end{proof}

\subsubsection{Performance Guarantee}
Now, we are ready to derive the performance guarantee from Theorem~\ref{thm:naive_subopt}. 
We start with the following value difference lemma for pessimistic offline RL:
\begin{lemma}[Theorem 4.2, \citet{jin2020pessimism}]
On event $\mathcal{E}$, at any step $h \in [H]$, we have
\begin{align}
J(\pi^*) - J(\hat{\pi}) \leq 2 \sum_{h = 1}^{H} \sum_{(\tau, a)} d_h^*(\tau, a) c_h(\tau, a)\,,
\end{align}
where $d^*_h(\tau, a) = \prob{\tau_h = \tau, a_h = a; \pi^*}$ for $\tau_h = (o_1, a_1, \hdots, o_h)$. 
\label{lem:value_difference}
\end{lemma}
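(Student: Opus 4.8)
The plan is to adapt the pessimistic value-difference argument of \citet{jin2020pessimism} essentially verbatim to the observation-history-MDP, replacing states by histories $\tau$ throughout. First I would define the per-step \emph{model evaluation error}
\begin{align*}
\Delta_h(\tau, a) = \left(r(\tau,a) + \sum_{o'} P(o' \mid \tau, a)\,\hat{V}_{h-1}(\tau \oplus o')\right) - \hat{Q}_h(\tau, a),
\end{align*}
the gap between the \emph{true} Bellman backup of the current value iterate and the estimated pessimistic Q-value. Since the algorithm sets $\hat{Q}_h = \hat{r} - c_h + \hat{P}\,\hat{V}_{h-1}$, this error equals $(r - \hat{r}) + (P - \hat{P})\cdot \hat{V}_{h-1}(\tau \oplus \cdot) + c_h$. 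The first step is to show that under the good event $\mathcal{E} = \mathcal{E}_1 \cap \mathcal{E}_2$ the composite bonus $c_h$ of Equation~\ref{eqn:bonus_expression} dominates the combined reward-plus-transition estimation error term by term (the transition piece controlled by $\mathcal{E}_1$, the reward piece by $\mathcal{E}_2$, and each term of $c_h$ carrying an extra factor of at least $H \geq 1$), which yields the clean two-sided bound $0 \leq \Delta_h(\tau, a) \leq 2 c_h(\tau, a)$ for all $(\tau, a)$.

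Next I would invoke the extended value-difference decomposition: for any comparator policy $\pi$,
\begin{align*}
V^{\pi}(\tau_1) - \hat{V}(\tau_1) = \sum_{h=1}^H \E{\pi}{\Delta_h(\tau_h, a_h)} + \sum_{h=1}^H \E{\pi}{\langle \hat{Q}_h(\tau_h, \cdot),\, \pi(\cdot \mid \tau_h) - \hat{\pi}(\cdot \mid \tau_h)\rangle},
\end{align*}
which follows by telescoping $\hat{V}$ along $\pi$-trajectories and is insensitive to whether the underlying process is Markov in states or in histories. Instantiating this with $\pi = \pi^*$, the policy-mismatch inner product is nonpositive because $\hat{\pi}$ is the greedy maximizer of $\hat{Q}$, so $\langle \hat{Q}_h, \hat{\pi}\rangle \geq \langle \hat{Q}_h, \pi^*\rangle$ pointwise; hence $V^{\pi^*}(\tau_1) - \hat{V}(\tau_1) \leq \sum_h \E{\pi^*}{\Delta_h(\tau_h, a_h)} \leq 2 \sum_h \E{\pi^*}{c_h(\tau_h, a_h)}$ by the first step.

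Finally, the pessimism guarantee (Lemma~\ref{lem:pessimism_guarantee}) gives $\hat{V}(\tau_1) \leq V^{\hat{\pi}}(\tau_1)$, so $V^{\pi^*}(\tau_1) - V^{\hat{\pi}}(\tau_1) \leq V^{\pi^*}(\tau_1) - \hat{V}(\tau_1)$, and chaining with the previous bound gives $V^{\pi^*}(\tau_1) - V^{\hat{\pi}}(\tau_1) \leq 2\sum_h \E{\pi^*}{c_h(\tau_h, a_h)}$. Taking the expectation over $\tau_1 \sim \rho_1$ and rewriting the trajectory expectation under $\pi^*$ as a sum against the occupancy measure $d^*_h(\tau, a) = \prob{\tau_h = \tau, a_h = a; \pi^*}$ then produces exactly $J(\pi^*) - J(\hat{\pi}) \leq 2\sum_{h=1}^H \sum_{(\tau, a)} d^*_h(\tau, a)\, c_h(\tau, a)$. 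I expect the main obstacle to be the first step: verifying that the three-term bonus in Equation~\ref{eqn:bonus_expression} really upper-bounds the joint estimation error under $\mathcal{E}$ so as to deliver both $\Delta_h \geq 0$ (the source of pessimism) and $\Delta_h \leq 2 c_h$ (the source of the factor $2$), since the variance-based transition term and the reward term must each be matched against the corresponding half of $c_h$; by contrast the value-difference telescoping and the greedy-policy sign argument are then routine and transfer unchanged from the state-based setting.
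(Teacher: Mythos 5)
Your proposal is correct and follows essentially the same route as the paper, which simply invokes the proof of Theorem 4.2 in \citet{jin2020pessimism} with states replaced by observation histories; your reconstruction (evaluation error $0 \leq \Delta_h \leq 2c_h$ under $\mathcal{E}$, the extended value-difference decomposition, the greedy-policy sign argument, and pessimism via Lemma~\ref{lem:pessimism_guarantee}) is exactly that argument, spelled out in more detail than the paper provides.
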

\begin{proof}
The proof follows straightforwardly from \citet{jin2020pessimism} for standard MDPs by simply replacing states with observation histories.
\end{proof}
Now, we are ready to bound the desired quantity 
$   
\mathsf{SubOpt}(\hat{\pi}^*) = \E{\mathcal{D}}{J(\pi^*) - J(\hat{\pi})}
$.
We have
\begin{align}
\label{eq:lcb_subopt_decomposition}
    \E{\mathcal{D}}{J(\pi^*) - J(\hat{\pi}^*)} 
    &= 
    \E{\mathcal{D}}{\sum_\tau \rho_1(\tau) (V^*(\tau) - V^{\hat{\pi}}(\tau))} \\
    &= 
    \underbracket{\E{\mathcal{D}}{\mathbb{I}\{\mathcal{\bar{E}}\} \sum_\tau \rho_1(\tau) (V^*(\tau) - V^{\hat{\pi}}(\tau))}}_{:= \Delta_1} \nonumber \\
    &\qquad 
    + \underbracket{\E{\mathcal{D}}{\mathbb{I}\{\exists \tau \in \gH, \ n(\tau, \pi^*(\tau)) = 0\} \sum_\tau \rho_1(\tau) (V^*(\tau) - V^{\hat{\pi}}(\tau))}}_{:= \Delta_2} \nonumber \\
    &\qquad 
    + \underbracket{\E{\mathcal{D}}{\mathbb{I}\{\forall \tau \in \gH, \ n(\tau, \pi^*(\tau)) > 0\} \mathbb{I}\{\mathcal{E}\} \sum_\tau \rho_1(\tau) (V^*(\tau) - V^{\hat{\pi}}(\tau))}}_{:= \Delta_3} \nonumber \,.
\end{align}
We bound each term individually. The first is bounded as $\Delta_1 \leq \prob{\mathcal{\bar{E}}} \leq 2 |\gH||\gA|H\delta \leq \frac{H\iota}{N}$ for choice of $\delta = \frac{1}{2|\gH||\gA|H N}$. 

\paragraph{Bound on $\Delta_2$.}
For the second term, we have
\begin{align*}
    \Delta_2
    &\leq
    \sum_\tau \rho_1(\tau) \E{\mathcal{D}}{\mathbb{I}\{n(\tau, \pi^*(\tau)) = 0}\}  \\
    &\leq 
    H \sum_{\tau} d^*(\tau, \pi^*(\tau)) \E{\mathcal{D}}{\mathbb{I}\{n(\tau, \pi^*(\tau)) = 0\}} \\
    &\leq
    C^* H \sum_{\tau} \mu(\tau, \pi^*(\tau)) (1 - \mu(\tau, \pi^*(\tau)))^N \\
    &\leq 
    \frac{4C^* |\gO|}{9N}\,,
\end{align*}
where we use that
$
    \rho_1(\tau) = d^*(\tau, \pi^*(\tau))
$
as $\tau = o_1$, and that $\max_{p \in [0, 1]} p(1 - p)^N \leq \frac{4}{9N}$.

\paragraph{Bound on $\Delta_3$.}
What remains is bounding the last term, which we know from Lemma~\ref{lem:value_difference} is bounded by
\begin{align*}
    \Delta_3 
    \leq 
     2\E{\mathcal{D}}{\mathbb{I}\{\forall \tau \in \gH, \ n(\tau, \pi^*(\tau)) > 0\} \sum_{h = 1}^{H} \sum_{(\tau, a)} d^*_{h}(\tau, a) c_h(\tau, a)}\,,
\end{align*}
Recall that $c_h(\tau, a)$ is given by
\begin{align*}
    b_h(\tau, a) 
    = 
    \sqrt{\frac{H\mathbb{V}(\hat{P}(\cdot \mid \tau, a), \hat{V}_{h - 1}(\tau \oplus \cdot) \iota}{n(\tau, a)}} + \sqrt{\frac{H\hat{r}(\tau, a)\iota}{n(\tau, a)}} + \frac{H\iota}{n(\tau, a)}
\end{align*}
We can bound the summation of each term separately. 
For the third term we have,
\begin{align*}
    \E{\mathcal{D}}{\sum_{h = 1}^H \sum_{(\tau, a)} d^*_{h}(\tau, a) \ \frac{H\iota}{n(\tau, a)}} 
    &\leq
    \sum_{h = 1}^{H} \sum_{(\tau, a)} d^*_{h}(\tau, a) \E{\mathcal{D}}{\frac{H\iota}{n(\tau, a)}} \\
    &\leq
    \sum_{\tau} \sum_{h = 1}^{H} d^*_{h}(\tau, \pi^*(\tau)) \ \frac{H\iota}{N\mu(\tau, \pi^*(\tau))} \\
    &\leq 
    \frac{H\iota}{N} \ \sum_{\tau} \left(\sum_{h=1}^{H} d^*_h(\tau, \pi^*(\tau)) \right) \ \frac{H}{\mu(\tau, \pi^*_h(\tau))} \\
    &\leq
    \frac{C^* |\gH| H^2 \iota}{N}\,.
\end{align*}
Here we use Jensen's inequality and that $\sum_{h = 1}^{H} d_h^*(\tau, a) \leq C^* \mu(\tau, a)$ for any $(\tau, a)$.
For the second term, we similarly have
\begin{align*}
    &\E{\mathcal{D}}{\sum_{h = 1}^H \sum_{(\tau, a)} d^*_{h}(\tau, a) \ \sqrt{H\frac{\hat{r}(\tau, a) \iota}{n(\tau, a)}}} \\
    &\ \leq 
    \E{\mathcal{D}}{\sqrt{\sum_{h = 1}^H \sum_{(\tau, a)} d^*_{h}(\tau, a) \frac{H\iota}{n(\tau, a)}}} \ \sqrt{{\sum_{h = 1}^H \sum_{(\tau, a)} d^*_{h}(\tau, a) \hat{r}(\tau, a)}} \\
    &\ \leq 
    \sqrt{\frac{C^* |\gH| H^3 \iota}{N}}\,,
\end{align*}
where we use Cauchy-Schwarz.
Finally, we consider the first term of $b_h(\tau, a)$
\begin{align*}
    &\E{\mathcal{D}}{\sum_{h = 1}^H \sum_{(\tau, a)}  d^*_{h}(\tau, a) \ \sqrt{H\frac{\mathbb{V}(\hat{P}(\cdot \mid \tau, a), \hat{V}_{h - 1}(\tau \oplus \cdot) \iota}{n(\tau, a)}}} \\
    &\ \leq 
    \E{\mathcal{D}}{\sqrt{\sum_{h = 1}^H \sum_{(\tau, a)} d^*_{h}(\tau, a) \frac{H\iota}{n(\tau, a)}}} \ \sqrt{{\sum_{h = 1}^H \sum_{(\tau, a)} d^*_{h}(\tau, a) \mathbb{V}(\hat{P}(\cdot \mid \tau, a), \hat{V}_{h-1}(\tau \oplus \cdot))}} \\
    &\ \leq 
    \sqrt{\frac{C^* |\gH| H^2 \iota}{N}} \ \sqrt{{\sum_{h = 1}^H \sum_{(\tau, a)} d^*_{h}(\tau, a) \mathbb{V}(\hat{P}(\cdot \mid \tau, a), \hat{V}_{h-1}(\tau \oplus \cdot))}}\,.
\end{align*}
Similar to what was done in \citet{zhang2020can,ren2021nearly} for finite-horizon MDPs, we can bound this term using variance recursion for finite-horizon observation-history-MDPs. Define
\begin{align}
    f(i) := \sum_{h = 1}^{H} \sum_{(\tau, a)} d^*_{h}(\tau, a) \mathbb{V}(\hat{P}(\cdot \mid \tau, a), (\hat{V}_{h-1}(\tau \oplus \cdot))^{2^i}) \,.
\label{eq:f_recursion}
\end{align}
Using Lemma 3 of \citet{ren2021nearly}, we have the following recursion:
\begin{align*}
    f(i) \leq \sqrt{\frac{C^* |\gH| H^2 \iota}{N} f(i + 1)} + \frac{C^* |\gH| H^2 \iota}{N} +  2^{i + 1}(\Phi + 1)\,,
\end{align*}
where
\begin{align}
    \Phi := \sqrt{\frac{C^* |\gH| H^2 \iota}{N}} \ \sqrt{{\sum_{h = 1}^H \sum_{(\tau, a)} d^*_{h}(\tau, a) \mathbb{V}(\hat{P}(\cdot \mid \tau, a), \hat{V}_{h-1}(\tau \oplus \cdot))}} + \frac{C^* |\gH| H^2 \iota}{N}
\label{eq:phi}
\end{align}
Using Lemma~\ref{lem:recursion_bound}, we can bound $f(0) = \mathcal{O}\left(\frac{C^* |\gH| H \iota}{N} + \Phi + 1\right)$. Using that for constant $c$,
\begin{align*}
\Phi 
&= 
\sqrt{\frac{C^* |\gH| H^2 \iota}{N} \ f(0)} + \frac{C^* |\gH| H^2 \iota}{N} \\
&\leq
\sqrt{\frac{C^* |\gH| H^2 \iota}{N}\left(\frac{c C^* |\gH| H^2 \iota}{N} + c\Phi + c\right)} + \frac{C^* |\gH| H^2 \iota}{N} \\
&\leq
\frac{c \Phi}{2} + \frac{2 c C^* |\gH| H^2 \iota}{N} + \frac{c}{2}\,
\end{align*}
we have that
\begin{align*}
    \Phi \leq c +  \frac{4 c C^* |\gH| H^2 \iota}{N}\,.
\end{align*}
Substituting this back into the inequality for $\Phi$ yields, 
\begin{align*}
    \Phi = \mathcal{O}\left(\sqrt{\frac{C^* |\gH| H^2 \iota}{N}} + \frac{C^* |\gH| H^2 \iota}{N}\right)
\end{align*}

Finally, we can bound
\begin{align*}
   \Delta_3 
   \leq 
    \sqrt{\frac{C^* |\gH| H^2 \iota}{N}} + \frac{C^* |\gH| H^2 \iota}{N}\,.
\end{align*}
Combining the bounds for the three terms yields the desired result.

\subsection{Proof of Lemma~\ref{lem:aggregation}}

Recall that the \emph{on-policy bisimulation metric} for policy $\pi$ on an observation-history-MDP is given by:
\begin{align}
    d^\pi(\tau, \tau') = \abs{r^\pi(\tau) - r^\pi(\tau')} + W_1\left(P^\pi(\cdot \mid  \tau), P^\pi(\cdot \mid \tau') \right)\,,
\end{align}
We use the following lemma that states that $d^\pi$ satisfies the following:
\begin{lemma}[Theorem 3, \citet{castro2019scalable}]
Given any two observation histories $\tau, \tau' \in \gH$ in an observation-history-MDP, and policy $\pi$,
\begin{align*}
\abs{V^\pi(\tau) - V^\pi(\tau')} \leq d^\pi(\tau, \tau')\,.
\end{align*}
\end{lemma}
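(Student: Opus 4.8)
The plan is to recognize the metric $d^\pi$ in the statement as the fixed point of the operator
\[
\mathcal{F}(d)(\tau,\tau') = \abs{r^\pi(\tau)-r^\pi(\tau')} + W_1(d)\bigl(P^\pi(\cdot\mid\tau),P^\pi(\cdot\mid\tau')\bigr),
\]
where $W_1(d)$ is the $1$-Wasserstein distance taken with respect to the ground metric $d$ itself, and then to show that the value-gap function $g(\tau,\tau'):=\abs{V^\pi(\tau)-V^\pi(\tau')}$ is a \emph{pre-fixed point}, i.e.\ $g \le \mathcal{F}(g)$ pointwise. Since $\mathcal{F}$ is monotone, and in this finite-horizon setting (where the timestep is encoded in the length of $\tau$) its iteration $d_0=0,\ d_{n+1}=\mathcal{F}(d_n)$ is a well-founded recursion of depth at most $H$ converging to the unique fixed point $d^\pi$, the monotone chain $g \le \mathcal{F}(g) \le \mathcal{F}^2(g) \le \cdots \to d^\pi$ would deliver the claim.

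To establish $g \le \mathcal{F}(g)$ I would start from the Bellman equation $V^\pi(\tau)=r^\pi(\tau)+\E{\sigma\sim P^\pi(\cdot\mid\tau)}{V^\pi(\sigma)}$ and apply the triangle inequality to split the gap into a reward term and a transition term:
\[
\abs{V^\pi(\tau)-V^\pi(\tau')} \le \abs{r^\pi(\tau)-r^\pi(\tau')} + \abs{\E{\sigma\sim P^\pi(\cdot\mid\tau)}{V^\pi(\sigma)} - \E{\sigma'\sim P^\pi(\cdot\mid\tau')}{V^\pi(\sigma')}}.
\]
The reward term already matches the first term of $\mathcal{F}(g)$, so everything reduces to bounding the difference of the two expectations by $W_1(g)\bigl(P^\pi(\cdot\mid\tau),P^\pi(\cdot\mid\tau')\bigr)$.

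The crux is the coupling argument. For any coupling $\omega\in\Gamma\bigl(P^\pi(\cdot\mid\tau),P^\pi(\cdot\mid\tau')\bigr)$ I would rewrite the difference of expectations as $\E{(\sigma,\sigma')\sim\omega}{V^\pi(\sigma)-V^\pi(\sigma')}$, bound its absolute value by $\E{\omega}{\abs{V^\pi(\sigma)-V^\pi(\sigma')}}=\E{\omega}{g(\sigma,\sigma')}$ via Jensen, and then take the infimum over all couplings $\omega$ to obtain precisely $W_1(g)\bigl(P^\pi(\cdot\mid\tau),P^\pi(\cdot\mid\tau')\bigr)$. Substituting back yields $g(\tau,\tau') \le \mathcal{F}(g)(\tau,\tau')$, completing the pre-fixed-point step.

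The main obstacle I anticipate is this Wasserstein step: I must invoke the coupling (primal) characterization of $W_1$ so that the pointwise value gap $\abs{V^\pi(\sigma)-V^\pi(\sigma')}=g(\sigma,\sigma')$ can be integrated against \emph{any} coupling before passing to the infimum, and I must verify that in the undiscounted finite-horizon setting $\mathcal{F}$ still admits a well-defined unique fixed point — here the horizon $H$ together with the length-encoding of time plays the role that a discount factor $\gamma<1$ plays in the contraction argument of \citet{castro2019scalable}. A concrete, self-contained alternative that sidesteps any contraction subtlety is to run the same reward/triangle/coupling computation as a backward induction on the history length $h$ from $H$ down to $1$, with base case $\abs{V^\pi(\tau)-V^\pi(\tau')}=\abs{r^\pi(\tau)-r^\pi(\tau')}$ at terminal histories where the transition term vanishes.
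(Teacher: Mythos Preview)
Your proposal is correct and is precisely the argument of \citet{castro2019scalable} and \citet{ferns2012metrics} that the paper defers to: the paper's own ``proof'' consists solely of the remark that the result follows from those references by replacing states with observation histories, whereas you have spelled out the actual fixed-point/coupling argument (with the finite-horizon backward-induction variant) that those references contain. There is nothing to add.
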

\label{lem:bisimulation}
\begin{proof}
The proof follows straightforwardly from \citet{castro2019scalable,ferns2012metrics} for standard MDPs by simply replacing states with observation histories.
\end{proof}

Furthermore, recall that we have a summarized MDP $(\mathcal{Z}, \gA, P, r, \rho_1, H)$ where observation histories are clustered using aggregator $\Phi$. Let us define the reward function and transition probabilities for policy $\pi$ in the summarized-MDP as
\begin{align*}
    r^\pi (\Phi(\tau)) &= \frac{1}{\xi(\Phi(\tau))} \int_{\zeta \in \Phi(\tau)} r^\pi(\zeta) d\xi(\zeta)\,, \\
    P^\pi(\Phi(\tau') \mid \Phi(\tau)) &= \frac{1}{\xi(\Phi(\tau))} \int_{\zeta \in \Phi(\tau)} P^\pi(\Phi(\tau') \mid \zeta) d\xi(\zeta)\,,
\end{align*}
where $\xi$ is a measure on $\gH$.

We have,
\begin{align*}
    &\abs{V^*(\tau) - V^*(\Phi(\tau))}  = \abs{r^*(\tau) - r^*(\Phi(\tau)) + \int_{o'} P^*(o' \mid \tau) V^*(\tau \oplus o') do' - \int_{z'} P^*(z' \mid \Phi(\tau)) V^*(z') dz} \\
    &\qquad\leq 
    \frac{1}{\xi(\Phi(\tau))} \int_{\zeta \in \Phi(\tau)} \abs{r^\pi(\tau) - r^\pi(\zeta)} + \abs{\int_{\tau'} P^\pi(\tau' \mid \tau) - P^\pi(\tau' \mid \zeta) V^\pi(\tau') d\tau'} d\xi(\zeta) \\
    &\qquad\qquad+ \frac{H-1}{H} \sup_{\tau} \abs{V^*(\tau) - V^*(\Phi(\tau))} \\
\intertext{Using Lemma~\ref{lem:bisimulation} and the dual formulation of the $W_1$ metric yields}
    &\qquad\leq 
    \frac{1}{\xi(\Phi(\tau))} \int_{\zeta \in \Phi(\tau)} \abs{r^\pi(\tau) - r^\pi(\zeta)} + W_1(P^\pi(\cdot \mid \tau), P^\pi(\cdot \mid \zeta)) d\xi(\zeta) \\
    &\qquad\qquad+ \frac{H-1}{H} \sup_{\tau} \abs{V^*(\tau) - V^*(\Phi(\tau))} \\
    &\qquad\leq 
    \frac{1}{\xi(\Phi(\tau))} \int_{\zeta \in \Phi(\tau)} d^\pi(\tau, \zeta)d\xi(\zeta) + \frac{H-1}{H} \sup_{\tau} \abs{V^*(\tau) - V^*(\Phi(\tau))} \\
    &\qquad\leq 
     \frac{1}{\xi(\Phi(\tau))} \int_{\zeta \in \Phi(\tau)} \hat{d}_\phi(\tau, \zeta) d\xi(\zeta) + \norm{\hat{d}_\phi - d^*}_{\infty} + \frac{H-1}{H} \sup_{\tau} \abs{V^*(\tau) - V^*(\Phi(\tau))} \\
    &\qquad\leq 
    2\varepsilon + \norm{\hat{d}_\phi - d^*}_{\infty} + \frac{H-1}{H} \sup_{\tau} \abs{V^*(\tau) - V^*(\Phi(\tau))} \,,
\end{align*}
Taking the suprenum of the LHS and solving yields the desired result.

\subsection{Proof of Theorem~\ref{thm:improved_subopt}}

The proof follows straightforwardly from noting that 
\begin{align*}
J(\pi^*) - J(\hat{\pi}) = \E{\rho_1}{V^*(\tau). -V^{\hat{\pi}}(\tau)} \leq 
\E{\rho_1}{V^*(\Phi(\tau)) - V^{\hat{\pi}}(\Phi(\tau))} + 2H\left(\varepsilon + \norm{\hat{d}_{\phi} - d^*}_\infty\right)\,,
\end{align*}
where the inequality follows from applying Lemma~\ref{lem:aggregation}.

Bounding the first term simply follows from using the same proof as in Section~\ref{app:proof_naive}, except where the space of observation histories $\gH$ is now replaced by the space of summarizations $\gZ$. This yields the desired result.

\newpage 

\section{Experiment Details}
\label{app:experiment_details}

In this section, we provide implementation details of each evaluated algorithm in each of our experimental domains.

\subsection{Gridworld Experiments} 

\paragraph{Network architecture.} We use a single-layer RNN with hidden dimension $128$ to encode observation histories, which consist of all the previously visited states (encoded as one-hot vectors). The output of the RNN is fed through a single-layer MLP of hidden dimension $256$, whose output is the representation used to generate the next action (as a softmax distribution), as well as train on using the bisimulation loss. 

\paragraph{Training details.}  We use the hyperparameters reported in Table~\ref{tab:hyperparameters_gridworld}.
\begin{table}[h]
\small
\centering
\begin{tabular}{lrr}
\toprule
Hyperparameter & \multicolumn{2}{r}{Setting } \\
\midrule
\midrule
CQL $\alpha$ && 0.1 \\
Bisimulation $\eta$ && 0.05 \\
Discount factor && 0.99 \\
Batch size && 32 \\
Number of updates per iteration && 200 \\
Number of iterations && 100 \\
Optimizer && AdamW \\
Learning rate && 3e-5 \\
\bottomrule
\end{tabular}
\caption{Hyperparameters used during training.}
\label{tab:hyperparameters_gridworld}
\end{table}

\subsection{ViZDoom Experiments} 

\paragraph{Network architecture.} 
We use the same convolutional architecture as in \citet{justesen18vizdoom}.
Specifically, we use a three-layer CNN with filter sizes of $[32, 64, 32]$ and strides $[4, 2,
1]$, which produces $32$ feature maps of size $7\times7$. 
Then, the flattened input size is fed to a dense layer size of hidden size $512$, then into a three-layer RNN with hidden dimension $512$ to encode observation histories.
Finally, the output of the RNN is fed through a single-layer MLP of hidden dimension $512$, whose output is the representation used to generate the next action (as a softmax distribution), as well as train on using the bisimulation loss. 

\paragraph{Training details.}  We use the hyperparameters reported in Table~\ref{tab:hyperparameters_vizdoom}.
\begin{table}[h]
\small
\centering
\begin{tabular}{lrr}
\toprule
Hyperparameter & \multicolumn{2}{r}{Setting } \\
\midrule
\midrule
CQL $\alpha$ && 0.05 \\
Bisimulation $\eta$ && 0.02 \\
Discount factor && 0.99 \\
Batch size && 64 \\
Number of updates per iteration && 700 \\
Number of iterations && 200 \\
Optimizer && AdamW \\
Learning rate && 7e-4 \\
\bottomrule
\end{tabular}
\caption{Hyperparameters used during training.}
\label{tab:hyperparameters_vizdoom}
\end{table}

\subsection{Wordle Experiments}

\paragraph{Network architecture.} We use  the GPT-2 small transformer architecture, with the weights initialized randomly. One head of the transformer is used to generate the next action (as a softmax distribution over the set of 26 characters). All heads are two-layer MLPs with hidden dimension twice that of the transformer’s embedding dimension.

\paragraph{Training details.}  We use the hyperparameters reported in Table~\ref{tab:hyperparameters_wordle}. All algorithms were trained on a single V100 GPU until convergence, which took less than 3 days.
\begin{table}[h]
\small
\centering
\begin{tabular}{lrr}
\toprule
Hyperparameter & \multicolumn{2}{r}{Setting } \\
\midrule
\midrule
ILQL $\tau$ && 0.8 \\
ILQL $\alpha$ && 0.001 \\
Bisimulation $\eta$ && 0.02 \\
Discount factor && 0.99 \\
Batch size && 1024 \\
Target network update $\alpha$ && 0.005  \\
Number of updates per iteration && 60 \\
Number of iterations && 60 \\
Optimizer && AdamW \\
Learning rate && 1e-4 \\
\bottomrule
\end{tabular}
\caption{Hyperparameters used during training.}
\label{tab:hyperparameters_wordle}
\end{table}

\end{document}